\theoremstyle{plain}
\newtheorem{theorem}{Theorem}[section]
\theoremstyle{definition}
\theoremstyle{remark}
\icmltitlerunning{SeMAIL: Eliminating Distractors in Visual Imitation via Separated Models}
\begin{document}

\twocolumn[
\icmltitle{SeMAIL: Eliminating Distractors in Visual Imitation via Separated Models}



\icmlsetsymbol{equal}{*}

\begin{icmlauthorlist}
\icmlauthor{Shenghua Wan}{nju}
\icmlauthor{Yucen Wang}{nju}
\icmlauthor{Minghao Shao}{nju}
\icmlauthor{Ruying Chen}{nju}
\icmlauthor{De-Chuan Zhan}{nju}
\end{icmlauthorlist}

\icmlaffiliation{nju}{National Key Laboratory for Novel Software Technology
, Nanjing University, Nanjing, China}

\icmlcorrespondingauthor{De-Chuan Zhan}{zhandc@nju.edu.cn}

\icmlkeywords{Machine Learning, ICML}

\vskip 0.3in
]



\printAffiliationsAndNotice{}  

\begin{abstract}
Model-based imitation learning (MBIL) is a popular reinforcement learning method that improves sample efficiency on high-dimension input sources, such as images and videos. Following the convention of MBIL research, existing algorithms are highly deceptive by task-irrelevant information, especially moving distractors in videos. To tackle this problem, we propose a new algorithm - named Separated Model-based Adversarial Imitation Learning (SeMAIL) - decoupling the environment dynamics into two parts by task-relevant dependency, which is determined by agent actions, and training separately. In this way, the agent can imagine its trajectories and imitate the expert behavior efficiently in task-relevant state space. Our method achieves near-expert performance on various visual control tasks with complex observations and the more challenging tasks with different backgrounds from expert observations.

\end{abstract}

\section{Introduction}
\label{introduction}
Reinforcement learning enables agents to autonomously learn specific behaviors and acquire diverse skills by interacting in environments. However, the rewards used for agent training are hard to define due to the expensive professional knowledge in domains \cite{DBLP:conf/nips/Hadfield-Menell17,everitt2021reward}. One promising direction is imitation learning \cite{DBLP:journals/jmlr/RossGB11,DBLP:journals/ftrob/OsaPNBA018}, where agents learn desired behaviors directly from expert demonstrations, especially from more readily available data such as images and videos \cite{DBLP:conf/iclr/StadieAS17,DBLP:conf/corl/YoungGT0AP20}. With model-based methods introduced in this field \cite{baram2016model,DBLP:conf/icml/FuYAJ21,rafailov2021visual,mile2022}, agents can sample data and train policy in the learned model, which significantly improves the sample efficiency and representation learning ability in visual observations.
In model-based visual imitation learning algorithms, there is a common issue caused by distractors that majorly hinder learning ability in complex real-world scenarios, as shown in \cref{fig:motivation}. In this case, the agent already obtained task-completion information from the expert by exploring specific actions. However, the environment model contains various task-irrelevant details and will take that irrelevant information into the next predicted state. Because of the difference in task-irrelevant parts, the agent only can receive a low reward from the discriminator even if the action in the task-relevant part nicely dovetails with the target. 

\begin{figure}[t]
\vskip 0.1in
\begin{center}
\centerline{\includegraphics[width=\columnwidth]{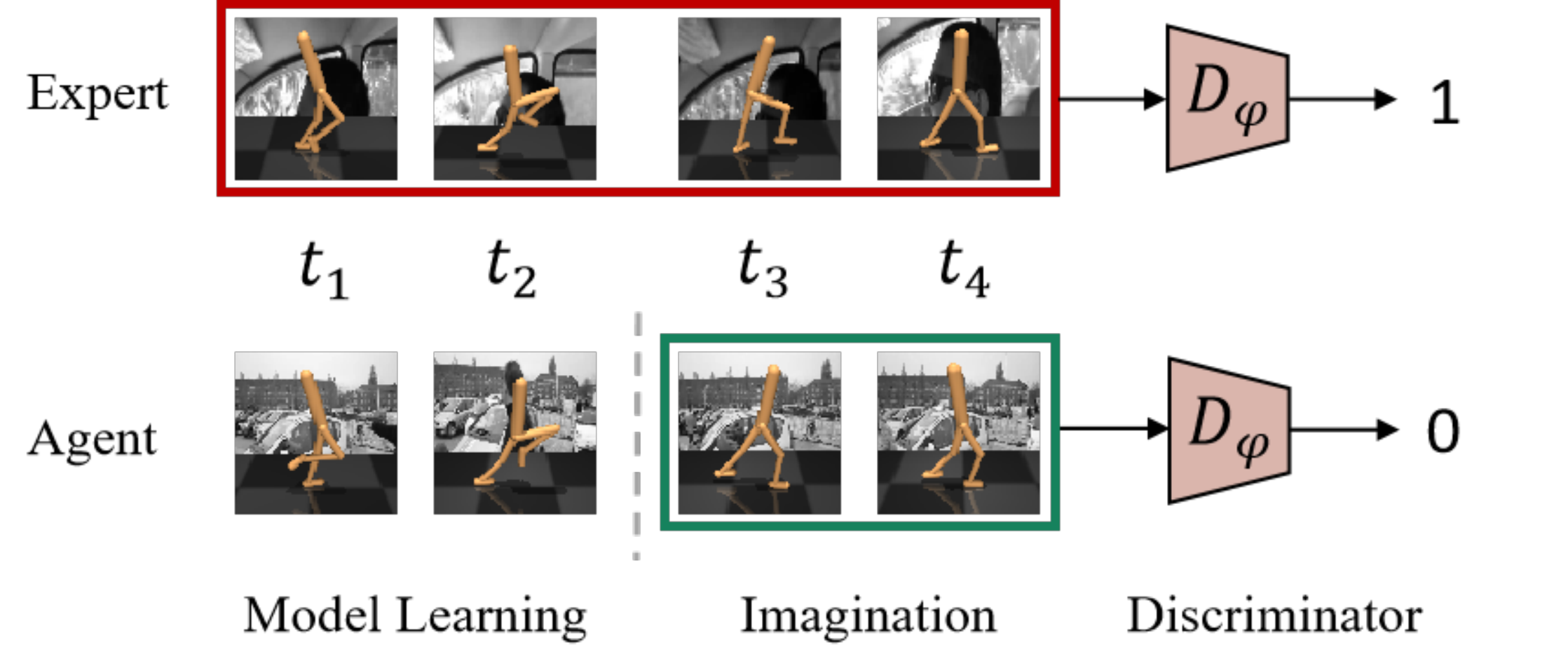}}
\caption{Illustration of convention model-based imitation learning which trains the policy in a single environment model. Real-world cases may contain a large number of distractors, especially moving distractors like walkers walking on the roadside. The model trained in these cases will tell the agent not only the result of its action but also something else irrelevant to the task. The discriminator may mistakenly use those imagined distractors to distinguish the agent and expert, even if they have similar task-relevant behaviors.}
\label{fig:motivation}
\end{center}
\vskip -0.1in
\end{figure}

Recent studies attempted to exclude distractors by bottlenecking the information between latent states and observations \cite{DBLP:conf/iclr/PengKTAL19} or extracting features unrelated to the task from the initial frames explicitly \cite{DBLP:conf/corl/ZolnaR0CBCDF020}. Other works match the policy and expert by minimizing mutual information using randomly pre-collected data in both domains \cite{DBLP:conf/iclr/StadieAS17, DBLP:conf/iclr/CetinC21}. In real-world practice, these methods still have shortcomings : (1) Only address IL problems on low-dimensional inputs or images with static distractors. (2) Pre-collected data is required for aligning the agent and the expert. To close the gap, we focus on excluding task-irrelevant information without additional requirements, which benefits solving real-world cases.

When the agent executes the same action based on the same states, the task-relevant parts will have a similar transition; contrariwise, the task-irrelevant parts will have independent transitions. We claim that only the task-relevant states in observation will change consistently when the agent and expert make a specific action. Based on this finding, we make the Action-conditioned Transition (AcT) assumption. We propose a practical method named Separated Model-based Adversarial Imitation Learning (SeMAIL) based on the assumption. For model learning, we design two models to estimate the forward dynamics for these two parts and dissociate the irrelevant one from the action input. The two separated models are trained on data from both expert and agent buffer. After that, We enforce task-relevant and irrelevant latent states together to reconstruct the original observation to learn the corresponding representations. We train the agent by adversarial imitation learning for policy learning, but the policy data used in discriminator and policy training is collected only in the task-relevant latent state space.

To begin with, we evaluate our method on modified DeepMind Control Suite tasks \cite{DBLP:journals/corr/abs-1801-00690}, which have complex real-scene video distractors in the background, and a 2-D classical control task from OpenAI Gym \cite{brockman2016openai}. The results show that SeMAIL achieves near-expert performance on various tasks. Further, SeMAIL and its variants outperform compared methods on a series of tasks with different distractors from expert demonstrations. From those sides, our proposed method can capture the most helpful information (e.g. agent joints or target body) for completing tasks, especially for tasks with complex and time-correlated distractors.

To summarize our contributions in this work: (i) We propose a new approach, Separated Model-based Adversarial Imitation Learning (SeMAIL), which aims to solve IL tasks from visual observations with complex or time-correlated distractors. (ii) We provide a theoretical analysis that shows the performance gap between agent and expert in observation space can also be bounded in the task-relevant latent state space. (iii) We show the superior performance of policies learned by SeMAIL across multiple visual control tasks with complex distractors, including learning from expert demonstrations with backgrounds different from observations.

\section{Related Work}

\textbf{Reward Estimation in Visual Imitation Learning.}
A practical way for imitation learning is to get the expert's supervised signal directly from high-dimensional inputs like images or videos \cite{pmlr-v78-finn17a, pathakICLR18zeroshot, DBLP:conf/corl/YoungGT0AP20,DBLP:journals/corr/abs-2206-15469}. Matching the agent and expert distribution based on extracting representations from visual observations and estimating the correct reward is an inherently challenging task. Previous research generates rewards for agent training based on aligned expert and agent trajectories through proprioceptive information  \cite{DBLP:conf/ijcai/TorabiWS19}. P-SIL \cite{cohen2021imitation} uses the Sinkhorn distance \cite{DBLP:conf/nips/Cuturi13} to estimate the reward to guide the agent learning. Recent work \cite{liu2022visual} proposes a new approach that recovers the reward based on the similarity of the image patch between the agent and expert observations, but it can not estimate the reward precisely if the observation contains much task-irrelevant information. Our method aims to provide task-relevant-based rewards so that it can solve visual imitation learning tasks with complex and time-correlated distractors.

\textbf{Model-based Imitation Learning.} Several works have introduced model-based approaches \cite{sutton1990time, DBLP:conf/icml/DeisenrothR11} to imitation learning to address the problem of sampling efficiency. MAIL \cite{baram2016model} trains the model and the discriminator on off-policy data for adversarial imitation learning, but it only focuses on solving low-dimensional state tasks. GCL \cite{DBLP:conf/icml/FinnLA16} equips inverse reinforcement learning with a local linear dynamics model to learn a good cost function but can not directly learn from natural images. For visual inputs, V-MAIL \cite{rafailov2021visual} is a representative model-based imitation learning method that learns a surrogate model for underlying MDPs and provides a theoretical analysis of the performance bound between the agent and the expert on partially observed MDPs. MILE \cite{mile2022} trains a recurrent state-space model on an offline set of expert data without any interaction with the environment and can handle the image observation with a large size. All of these methods learn an environment model to approximate the underlying dynamics to improve sample efficiency in imitation learning. However, they ignore the possible distractors with their own transition dynamics, common in real-world cases, and can not prioritize task-relevant features in the learning process. If these distractors are absorbed into latent states, it will not only influence the next stage in the model learning process but further make the agent and discriminator focus on some task-irrelevant features.

\textbf{Reinforcement Learning with Noisy Observations.}\label{sec:rl with noise} Many recent RL studies have explored ways for better performance in environments with noise or distractors. These methods can be divided into two categories: reconstruction-free and reconstruction-based. As reconstruction-free methods, DBC \cite{DBLP:conf/iclr/0001MCGL21} learns a compact latent state by bisimulation metric to filter out distractors in the environment. InfoPower \cite{DBLP:conf/iclr/BharadhwajBEL22} combines a variational empowerment term into the state-space model to capture task-relevant features at first. These works substitute the reconstruction's functionality with other designs. For reconstruction-based methods, Denoised MDPs \cite{DBLP:conf/icml/0001D0IZT22} decomposites the visual observation into four parts by action and reward, and constructs the corresponding models. The most similar method to our approach is TIA \cite{DBLP:conf/icml/FuYAJ21}, which also designs two models to capture the task and distractor features. All these methods, including TIA, separate the task-related and distractor information through \textbf{rewards} based on human-designed prior knowledge. Our method has essential differences from TIA in model learning and policy learning. For model learning, we design a model to model task-relevant and irrelevant dynamics through conditioned actions instead of using ground-truth rewards as in TIA. For policy learning, we use a discriminator to train the agent by expert demonstrations in the task-relevant state space, while TIA trains it using rewards decoded by the learned model.

\section{Preliminaries}

A partially observed Markov decision process (POMDP) is an MDP in which agents must make decisions based on incomplete information \cite{kaelbling:aij98}. POMDP can be formalized as a 7-tuple $\langle \mathcal{S},\mathcal{A}, r, p, \mathcal{O},\Omega, \gamma \rangle$ where $\mathcal{S}$ is the state space, $\mathcal{A}$ is the action space, $r:\mathcal{S}\times\mathcal{A}\times\mathcal{S}\rightarrow \mathbb{R}$ is the reward function, $\mathcal{O}$ is the observation space, $p:\mathcal{S}\times \mathcal{A}\times \mathcal{S}\rightarrow [0, 1]$ is the state transition probability function, $\Omega:\mathcal{S}\times\mathcal{A}\times\mathcal{O}\rightarrow[0, 1]$ is the observation probability function and $\gamma\in[0, 1)$ is the discount factor. In a POMDP, the agent infers the belief state with incomplete observations of the environment and uses this information to make decisions that maximize its cumulative reward. Since the agent in imitation learning with complex distractors has no access to the ground-truth state, we can regard it as a type of POMDP\footnote{More discussions about the POMDP assumption are in \cref{app:BMDP}.}. The agent can only infer the belief state from the historical observations and accept the supervised signal from demonstrations collected by the optimal expert's policy $\pi^E$ in an inverse RL fashion.

Adversarial Imitation Learning (AIL) \cite{ho2016generative,DBLP:conf/iclr/FuLL18} is a typical sort of inverse RL algorithm, which takes the advantage of Generative Adversarial Networks (GAN)~\cite{goodfellow2014generative} to train a discriminator $D_{\psi}$ to distinguish between the agent trajectory collected from the environment and the expert data from fixed demonstrations. AIL aims to learn a policy $\pi$ to minimize the divergence between the expert and agent occupancy measures, meanwhile maximizing its entropy. It can be formulated as $\arg\min_{\pi}-H(\pi)+\psi^{\star}(\rho_{\pi}-\rho_{\pi_E})$, where $\psi$ is a regularizer and $\psi^{\star}$ is convex conjugate of it. There are various choices for $\psi$ \cite{DBLP:conf/corl/GhasemipourZG19}, and we adopt the JS divergence used in GAIL \cite{ho2016generative}. The objective is as follows: 
\begin{equation}
\begin{split}
    \max_{\pi}\min_{D_{\psi}} \;&\mathbb{E}_{(s, a)\sim \rho^E_{\mathcal{M}}}\Big[-\log D_{\psi}(s, a) \Big]\\
   & + \mathbb{E}_{(s, a)\sim \rho^{\pi}_{\mathcal{M}}}\Big[-\log(1-D_{\psi}(s, a))\Big]
\end{split}
\end{equation}

\begin{figure*}[t]
\vskip 0.1in
\begin{center}
\centerline{
\subfigure[Model learning]{\includegraphics[width=0.55\linewidth]{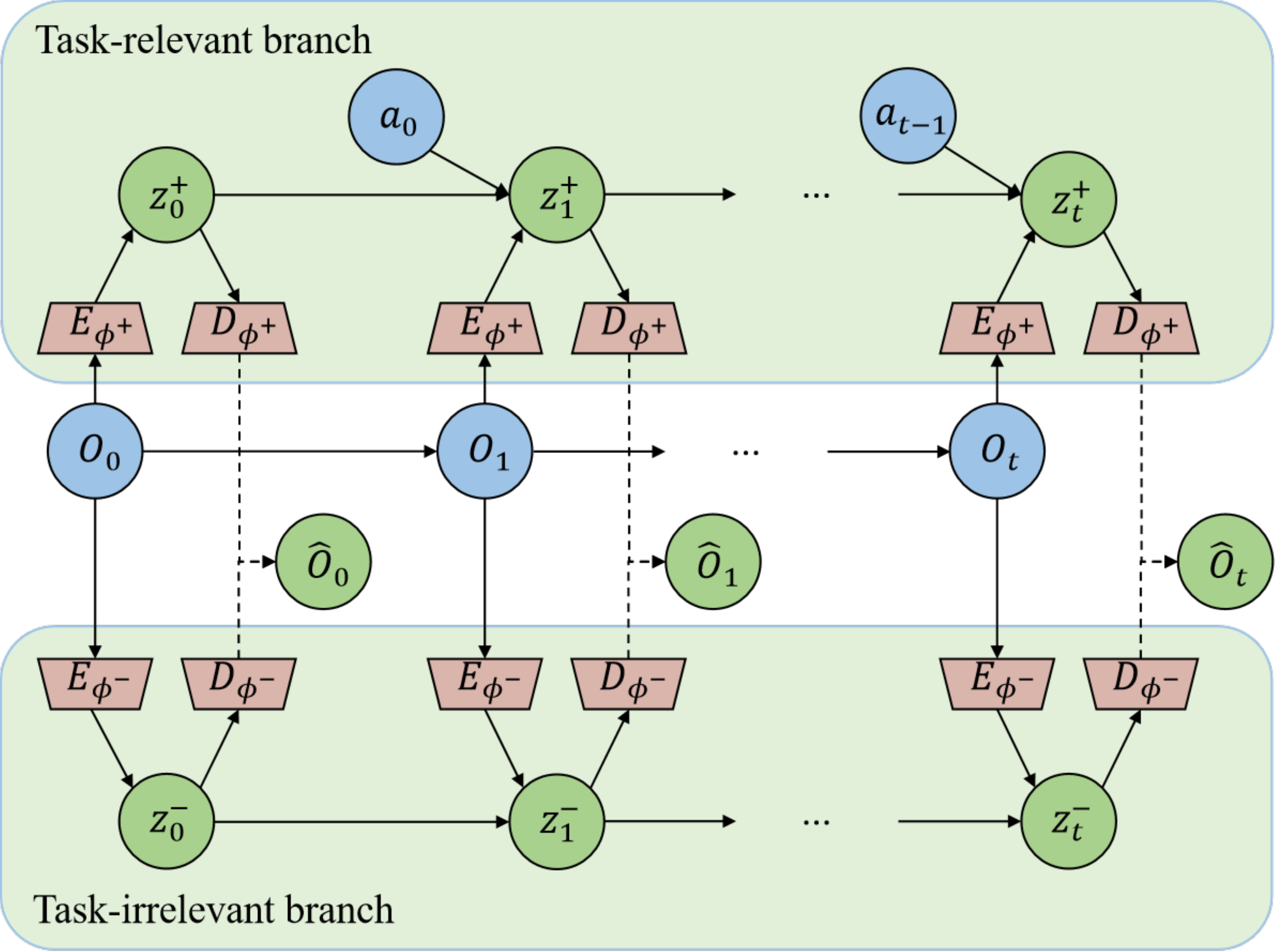}
\label{fig:model learning}}
\subfigure[Adversarial imitation learning in imagination]{\includegraphics[width=0.44\linewidth]{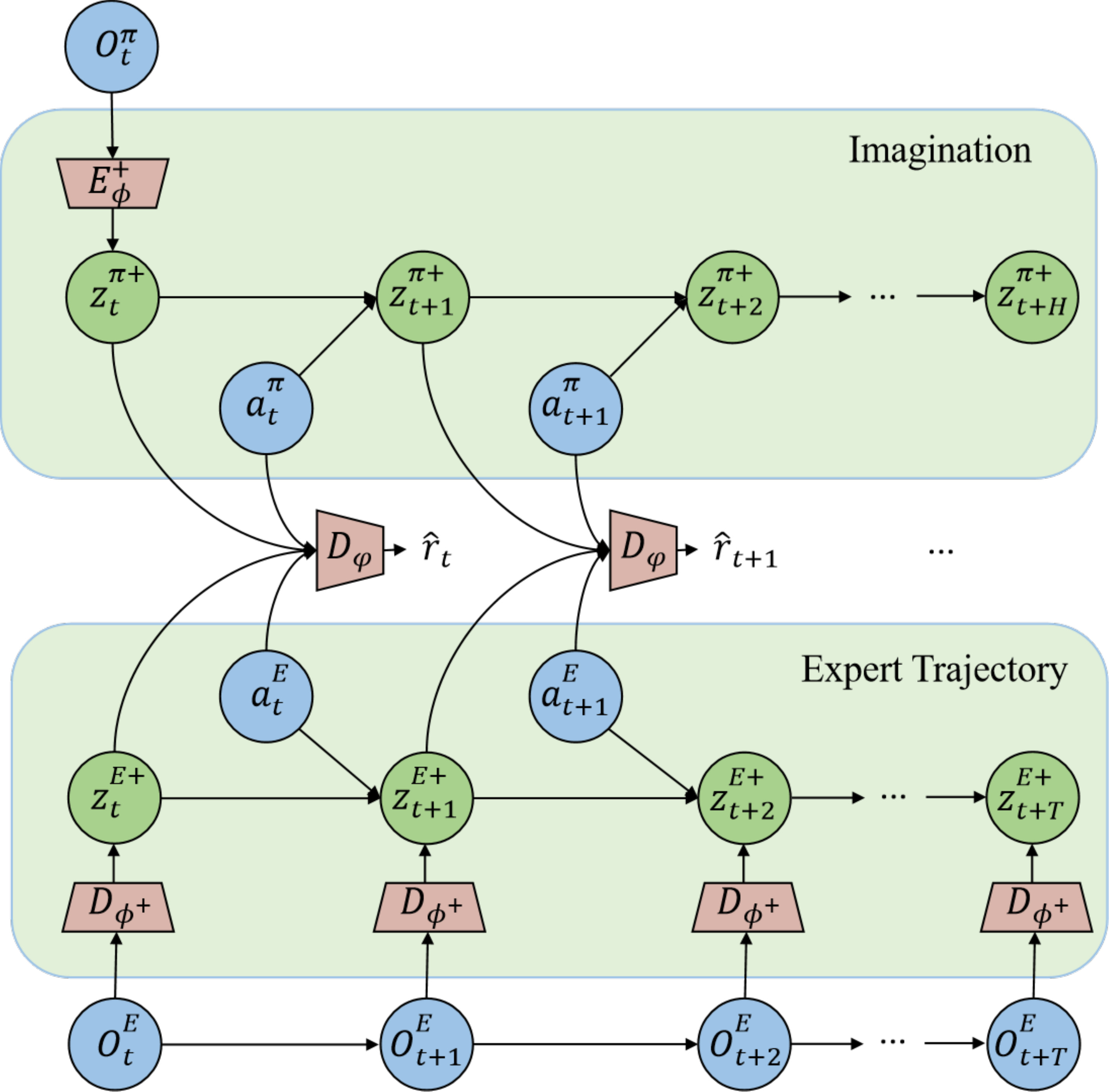}
\label{fig:policy learning}
}}
\vspace{-2pt}
\caption{Overview of SeMAIL. (a) The model learning consists of two branches: the top branch learns $p(z^+_{t}|z^+_{t-1}, a_{t-1})$ for task-relevant transition, and the bottom branch learns $p(z^-_{t}|z^-_{t-1})$ for distractor transition. $z^+_t$ and $z^-_t$ cooperatively reconstruct the observation $\hat{o}_t$ by decoders $D_{\phi^+}$ and $D_{\phi^-}$. (b) For policy learning, the agent samples trajectories in the learned task-relevant latent state space and updates its policy using rewards provided by the discriminator $D_{\psi}$.}
\label{fig:model_MBRL}
\end{center}
\vskip -0.1in
\end{figure*}

\section{Approach}
In this section, we first introduce the basic assumptions and overall architecture of SeMAIL (Section \ref{rssm2}). SeMAIL separates the model learning into two branches: task-relevant, which captures the task information with action inputs, and task-irrelevant, which learns the environmental background and other distractors dynamics without action inputs. We derive the lower bound of the mutual information between the observations and the latent states for observation reconstruction. We encourage the two models to reconstruct the original observations cooperatively to maximize the mutual information (Section \ref{observation reconstruction}). For policy learning, we use the GAIL framework to optimize the policy based on the theoretical result of \cref{thm:gail bound} (Section \ref{policy learning}).

\subsection{Learning Separated Models}
\label{rssm2}

We make a basic assumption, named Action-conditioned Transition (AcT), of the underlying mechanism of the environment, whose latent state $z_t$ consists of the task-relevant part $z_t^+$ and irrelevant part $z_t^-$ at time $t$. We assume that $z_t^+$ depends on the last task-relevant part $z_{t-1}^+$ and action $a_{t-1}$, and $z_t^-$ only depends on the last task-irrelevant part $z_{t-1}^-$. These two components $z^+_t$ and $z^-_t$ process \textbf{independently}\footnote{More discussions about the practice of AcT assumption can be found in \cref{app:AcT}.}. Thus, we can decompose the forward dynamics into two independent transition functions such that 
\begin{equation}
    \label{eq:AcT}
    p(z_t|z_{t-1}, a_{t-1}) = p(z_t^+|z_{t-1}^+, a_{t-1})p(z_{t}^-|z_{t-1}^-)
\end{equation}
The overall architecture is shown in \cref{fig:model learning}. First, we use two separated encoders $E_{\phi^+}$ and $E_{\phi^-}$ to extract the low-dimensional embedding $z_t^+$ and $z_t^-$ from original image observations $o_t$. Then, we design two forward transition models for learning the task and environment background dynamics separately: the \textit{task model} $p_{\theta^+}(z^{+}_{t+1}|z^{+}_t,a_t)$ which predicts the next task-relevant latent state, and the \textit{background model} $p_{\theta^-}(z^-_{t+1}|z^-_t)$ which infers the next irrelevant latent state. We simultaneously learn two variational encoders $q_{\psi^+}(z^+_t|o_t, z^+_{t-1}, a_{t-1})$ and $q_{\psi^-}(z^-_t|o_t, z^-_{t-1})$ to infer the posterior estimation of task-relevant and irrelevant latent states, respectively. To obtain compact representations of these two latent states, we jointly optimize the loss function in \cref{eq:rssm2} to minimize the KL divergence between the prior and posterior estimation of $z^+_t$ and $z^-_t$. We train the separated models on the data both from expert and agent buffers. The derivation of the loss function below is in \cref{app:deriv}.
\begin{equation}
\label{eq:rssm2}
\begin{split}
    \mathcal{L}_{\mathcal{\widetilde{M}}} &=~\mathbb{E}_{(o_{\tau},a_{\tau})\sim\mathcal{B}_{\pi}\cup\mathcal{B}_E}\\
    &\Bigg[\sum_{t=1}^T\mathbb{E}_{q(z_{t-1}^+|o_{1:t-1},a_{1:t-2})q(z_{t-1}^-|o_{1:t-1})}\\
    &\bigg(\mathbb{D}_{\text{KL}}\left[q_{\psi^-}(z^-_t|o_t, z^-_{t-1})||p_{\theta^-}(z^-_t|z^-_{t-1})\right]+\\
    &\mathbb{D}_{\text{KL}}\left[q_{\psi^+}(z^+_t|o_t, z^+_{t-1}, a_{t-1})||p_{\theta^+}(z^+_t|z^+_{t-1}, a_{t-1})\right]\bigg)\Bigg]
\end{split}
\end{equation}

\subsection{Joint Observation Reconstruction}
\label{observation reconstruction}
Most of previous model-based reinforcement learning methods consider auxiliary reconstruction-loss to optimize the observation encoder \cite{hafner2019dream,rafailov2021visual}. They formulate this loss as maximizing the mutual information $I(o_t;z_t)$ between the observations and latent states. We can optimize it by maximizing its BA lower bound \cite{agakov2004algorithm}:
\begin{equation}
    I(o_t;z_t)\ge \mathbb{E}_{p(o_t, z_t)}[\ln q_{\phi}(o_t|z_t)]+\mathcal{H}(p(o_t))\nonumber
\end{equation}
Because the observations do not depend on the latent state, optimization only considers maximizing the first term. However, enforcing the decoder to recover the whole observations from the encoded latent states will introduce bias when observations contain distractors. For imitation learning, it is catastrophic if the discriminator focuses on some task-irrelevant information and results in providing misleading reward to the agent. To avoid the latent state trained from all the details in observation, we maximize the mutual information between the observation $o_t$ and the combination of the task-relevant latent state $z_t^+$ and the irrelevant latent state $z_t^-$ as follows:
\begin{equation}
    I(o_t;z_t^+, z_t^-)\ge \mathbb{E}_{p(o_t, z_t^+, z_t^-)}[\ln q_{\phi}(o_t|z_t^+, z_t^-)]\nonumber
\end{equation}
To implement it, we design two decoders $D_{\phi^+}$ and $D_{\phi^-}$. With the given learned representation $z_t^+$ and $z_t^-$, decoder $D_{\phi^+}$ and $D_{\phi^-}$ will output the task-relevant visual component $\hat{o}^+_t$ with mask $M^+_t$ and the irrelevant visual component $\hat{o}_t^-$ with mask $M^-_t$. We use a 2D-convolutional layer to generate the final image mask $M_t$ from these two masks, a technique commonly utilized in prior works \cite{DBLP:conf/iccv/HeGDG17,DBLP:conf/icml/FuYAJ21}. These two visual components cooperatively recover the original observation at time $t$, formulated as $\hat{o}_t=M_t\odot \hat{o}_t^++(1 - M_t)\odot \hat{o}^-_t$.

Reconstructing the original image observation from two latent states may result in $z^+_t$ dominating the whole reconstruction process and thus containing some irrelevant information at time $t$. We assume that the task-relevant information is only a small proportion of the observation, which is also posed in \cite{DBLP:conf/icml/FuYAJ21,DBLP:conf/icml/0001D0IZT22}. To avoid the learning collapse of $z_t^-$ and too much information captured by $z_t^+$, we design an additional observation decoder $q_{\tilde{\phi}}$ following TIA to decode the whole observation. This term is designated as background-only reconstruction (BoR), with the aim that the non-controllable latent state $z_t^-$ can recover the task-irrelevant background as much as possible. The reconstruction loss\footnote{The derivation of the loss function is in \cref{app:deriv}} can be written as:
\begin{equation}\label{eq:obz_rec}
\begin{split}
\mathcal{L}_{O}=~&\mathbb{E}_{(o_{\tau},a_{\tau})\sim\mathcal{B}_{\pi}\cup\mathcal{B}_E}\\
&\Bigg[\sum_{t=1}^T\bigg(\mathbb{E}_{q(z_{t}^+|o_{1:t},a_{1:t-1})\atop q(z_{t}^-|o_{1:t})}[\ln q_{\phi^{+}, \phi^{-}}(o_t|z^+_t, z^-_t)] \\
&+\lambda\mathbb{E}_{q(z_{t}^-|o_{1:t})}[\ln q_{\tilde{\phi}}(o_t|z^-_t)]\bigg)\Bigg]
\end{split}
\end{equation}
where $\lambda$ is a hyper-parameter that controls the weight of the BoR term. In contrast to TIA, the task-irrelevant hidden state $z_t^{-}$ in the expectation of BoR loss does not rely on actions, as postulated by the AcT assumption.

\begin{figure*}[t]
\vskip 0.1in
\begin{center}
\centerline{\includegraphics[width=\linewidth]{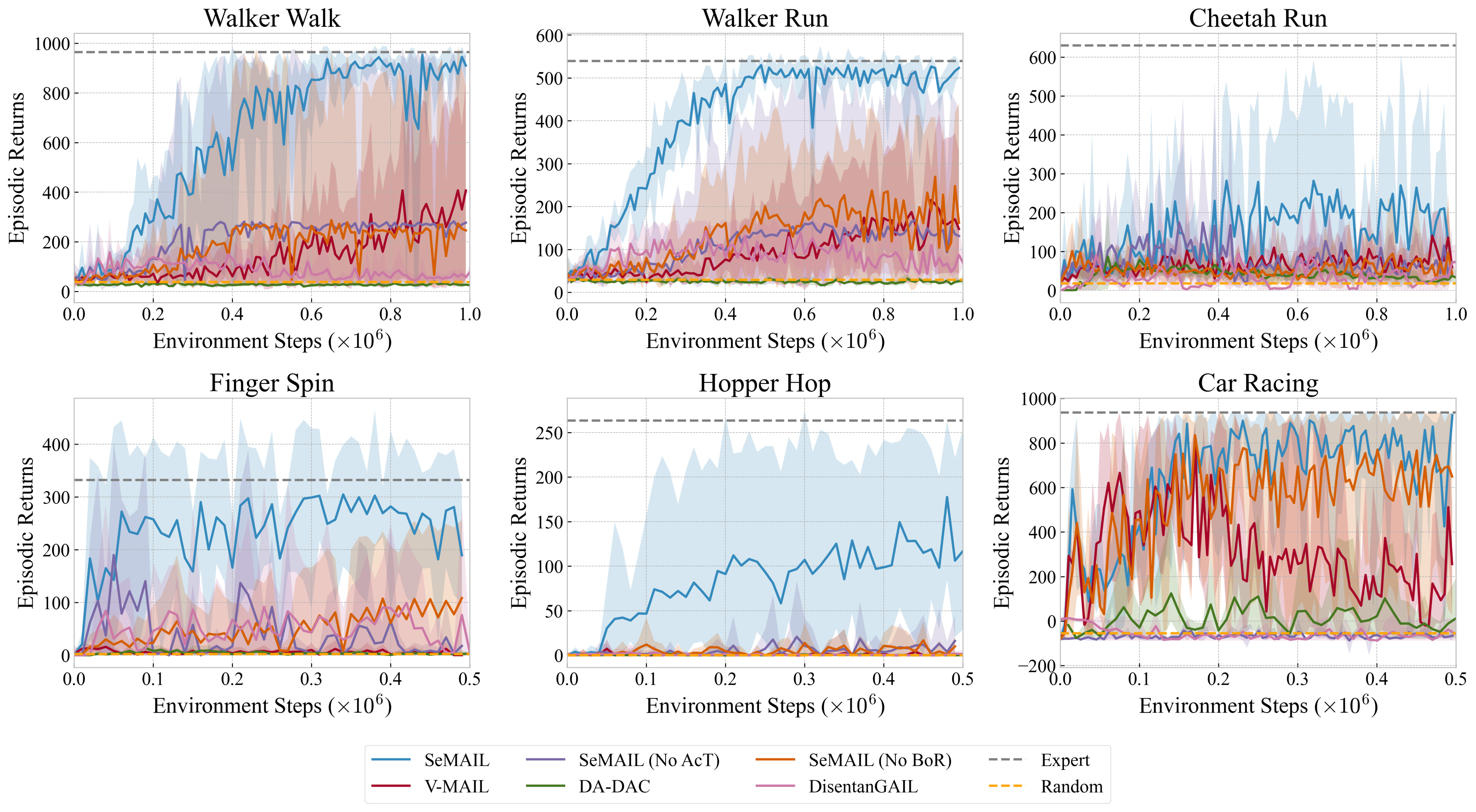}}
\caption{Evaluation results of our method SeMAIL and the baselines over four seeds in six visual control tasks. The solid curves present the average episodic returns, and the shaded region represents the range of performance under different runs. SeMAIL consistently outperforms the three compared methods in almost all environments.}
\label{fig:results}
\end{center}
\vskip -0.1in
\end{figure*}

\begin{figure*}[ht]
\vskip 0.1in
\begin{center}
\centerline{
\subfigure}{\includegraphics[width=0.17\linewidth]{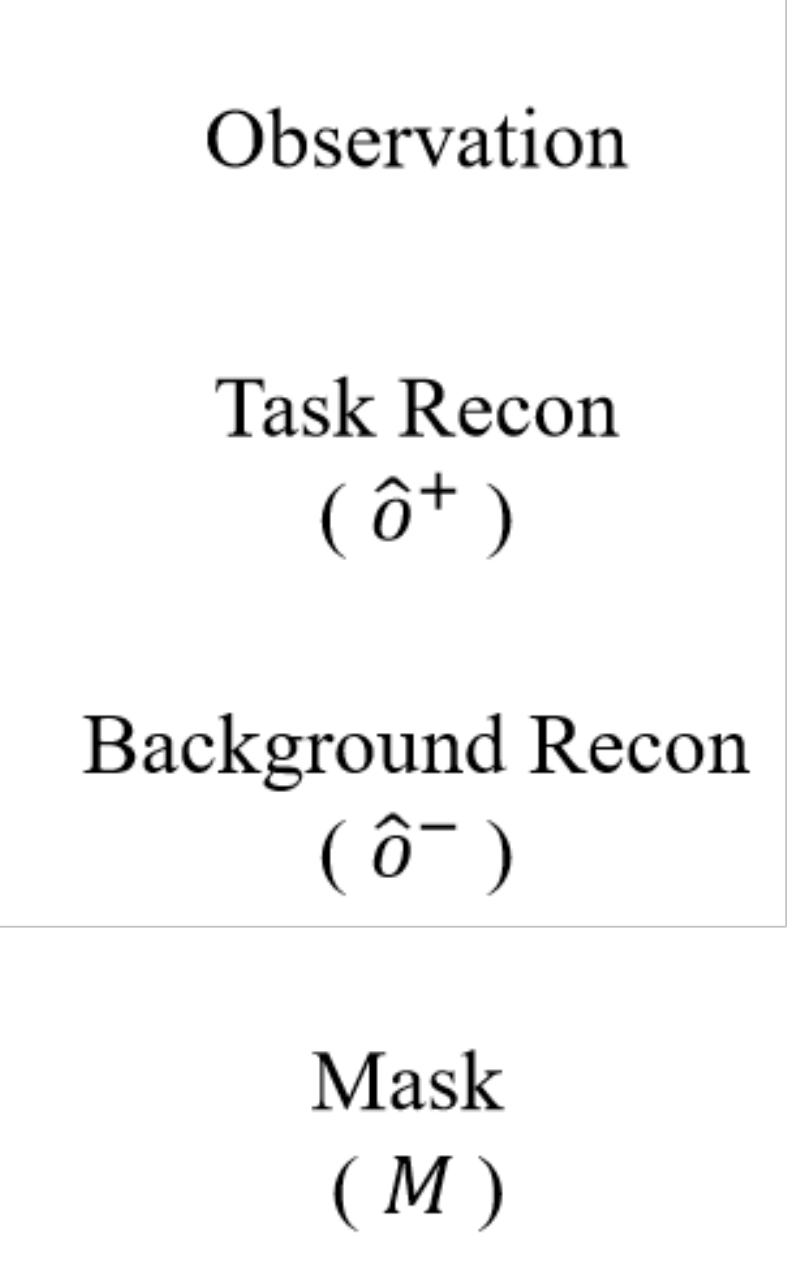}}
\subfigure[SeMAIL]{\includegraphics[width=0.27\linewidth]{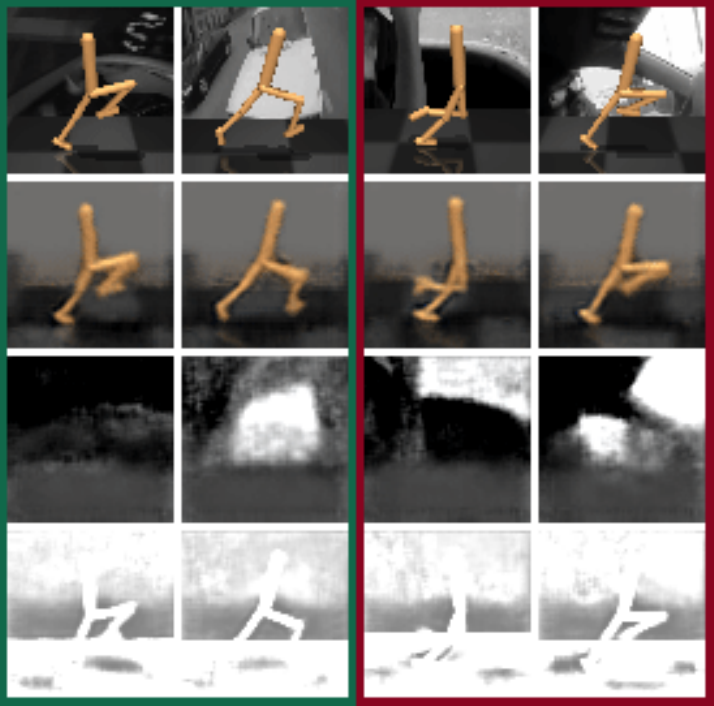}\label{fig:ab:SeMAIL}}
\subfigure[SeMAIL (No AcT)]{\includegraphics[width=0.27\linewidth]{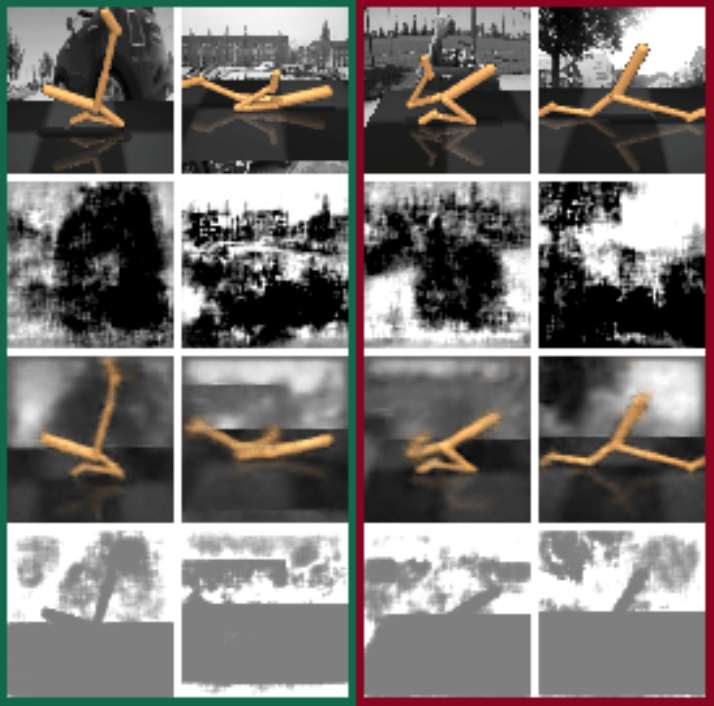}\label{fig:ab:no action}}
\subfigure[SeMAIL (No BoR)]{\includegraphics[width=0.27\linewidth]{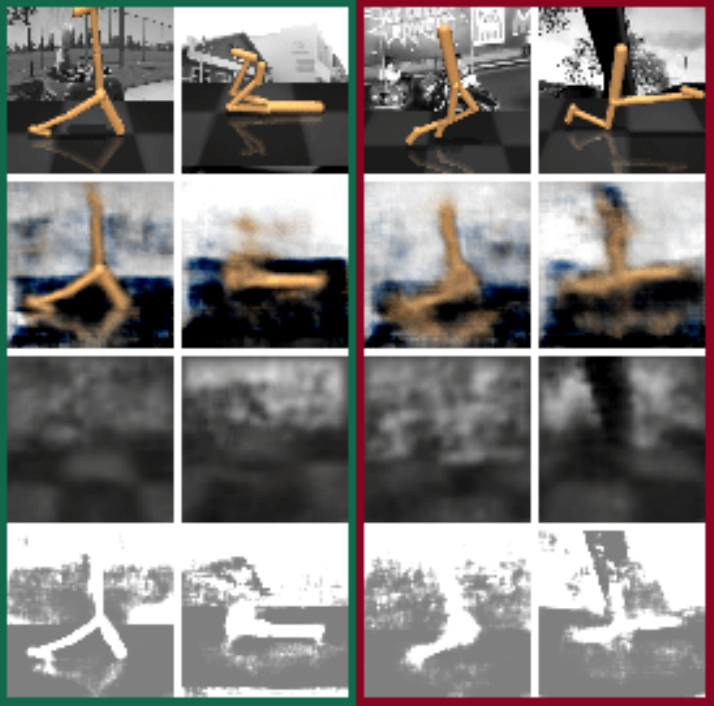}\label{fig:ab:no obs}}

\caption{Reconstruction results of SeMAIL and its ablated variants for different semantics in observations from the agent (green) and the expert (red). (a) SeMAIL can successfully separate the task-relevant part and irrelevant part from observations and reconstruct them. (b) SeMAIL (No AcT) reverses the reconstructions for these two parts. There is no task information in $\hat{o}^+$. (c) SeMAIL (No BoR) can reconstruct $\hat{o}^+$ and $\hat{o^-}$ well but contain some task-irrelevant information in the former.}
\label{fig:ablations}
\end{center}
\vskip -0.1in
\end{figure*}

\subsection{Policy Learning}
\label{policy learning}

In this section, we provide a theoretical upper bound on the performance gap between the expert and the policy trained on the task-relevant transition model. The agent can perform planning and adversarial imitation learning in task-relevant latent states and reach a similar performance to the expert theoretically. Based on the Lemma 1 stated in V-MAIL \cite{rafailov2021visual}, we extend the policy and model deviation from the original dynamics of MDPs to the task-relevant dynamics. Suppose there is a model $\widetilde{\mathcal{M}}^+$ which approximates the underlying task-relevant MDP $\mathcal{M}^+$ such that 
$\epsilon=\mathbb{D}_{\text{TV}}^{\max}(\widetilde{p}(s, a), p(s, a)), \forall (s, a)$.
Then, the expected returns gap between expert $\pi^E$ and learned policy $\pi$ can be bounded as 
\begin{equation}
\begin{split}
    &|J(\pi^E, \mathcal{M}^+)-J(\pi, \widetilde{\mathcal{M}}^+)|\le \\
    &\frac{R_{\max}}{1-\gamma}\mathbb{D}_{\text{TV}}(\rho^{\pi}_{\widetilde{\mathcal{M}}^+}, \rho^E_{\mathcal{M}^+})+\frac{\epsilon\cdot R_{\max}}{(1-\gamma)^2}
\end{split}
\end{equation}
where $R_{\max}=\max_{(s, a)}\mathcal{R}(s, a), \forall (s, a)$ is the maximum of the reward in the MDP with task-relevant dynamics.
We need to minimize the divergence $\mathbb{D}_{\text{TV}}(\rho^{\pi}_{\widetilde{\mathcal{M}}^+}, \rho^E_{\mathcal{M}^+})$ to achieve near-expert performance. We provide a theory that this divergence can also be bounded in the task-relevant latent state space $\mathcal{Z}^+$, which makes our proposed SeMAIL method available under the adversarial imitation learning framework. 
\begin{theorem}
\label{thm:gail bound}
Consider a POMDP $\mathcal{M}$ with high-dimensional inputs such as images. Let $s_t$ be the ground-truth state and $z_t$ be the latent representation of the whole observation $o_t$. $z_t^+$ and $z_t^-$ represent for the task-relevant and irrelevant component, respectively, which meets the condition of $p(s_t|z_t, a_t) = p(s_t|z_t^+, a_t)p(s_t|z_t^-)$ based on the AcT assumption mentioned in \cref{rssm2}. Then, we have
\begin{equation}
\begin{split}
    \mathbb{D}_f(\rho^{\pi}_\mathcal{M}(o, a)||\rho^{E}_\mathcal{M}(o, a)) \le &
     \mathbb{D}_f(\rho_{\mathcal{M}}^{\pi}(s, a)||\rho_{\mathcal{M}}^E(s, a))\\
     \le& \mathbb{D}_f(\rho_{\mathcal{M}}^{\pi}(z^+, a)||\rho_{\mathcal{M}}^E(z^+, a))
\end{split}
\end{equation}
where $\mathbb{D}_f$ is a generic $f$-divergence.
\end{theorem}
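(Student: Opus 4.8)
The plan is to recognize both inequalities as instances of the data-processing inequality (DPI) for $f$-divergences, i.e. monotonicity under Markov kernels: if $P$ and $Q$ are distributions on a space $\mathcal{X}$ and $K(\cdot\mid x)$ is a Markov kernel into $\mathcal{Y}$ that does \emph{not} depend on which of $P,Q$ produced its input, then $\mathbb{D}_f(KP\,\|\,KQ)\le \mathbb{D}_f(P\,\|\,Q)$. The theorem then reduces to exhibiting, for each inequality, a single \emph{policy-independent} kernel that transports the finer occupancy measure onto the coarser one. I would also record the routine fact that pushing a discounted occupancy measure $\rho=(1-\gamma)\sum_t \gamma^t\Pr\nolimits_t$ through a fixed per-timestep kernel again yields the occupancy measure in the transformed space, since $\rho$ is a mixture over $t$ and the kernel acts identically at every $t$; this lets me move freely between per-step kernels and occupancy-measure push-forwards.

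For the first inequality I would define the kernel $K_{\mathrm{obs}}$ that carries the action through unchanged and draws $o\sim\Omega(\cdot\mid s,a)$ from the POMDP's observation model $\Omega$. Because $\Omega$ is a fixed property of the environment, this kernel is identical for the agent and the expert. Applying $K_{\mathrm{obs}}$ to $\rho^{\pi}_{\mathcal{M}}(s,a)$ and $\rho^{E}_{\mathcal{M}}(s,a)$ produces exactly $\rho^{\pi}_{\mathcal{M}}(o,a)$ and $\rho^{E}_{\mathcal{M}}(o,a)$, so DPI immediately gives $\mathbb{D}_f(\rho^{\pi}_{\mathcal{M}}(o,a)\|\rho^{E}_{\mathcal{M}}(o,a))\le \mathbb{D}_f(\rho^{\pi}_{\mathcal{M}}(s,a)\|\rho^{E}_{\mathcal{M}}(s,a))$. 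This step is essentially free once the kernel is identified.

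For the second inequality I would build a reconstruction kernel $K_{+}$ from $(z^+,a)$ to $(s,a)$ that emits the ground-truth state from the task-relevant latent and the action. The AcT factorization $p(s\mid z,a)=p(s\mid z^+,a)\,p(s\mid z^-)$ is the ingredient that isolates $p(s\mid z^+,a)$ as the component of the state-generating channel driven by $z^+$ alone, and that I would use to argue the marginalization over $z^-$ leaves a kernel depending only on $(z^+,a)$. Granting that this channel is fixed across policies, a second application of DPI yields $\mathbb{D}_f(\rho^{\pi}_{\mathcal{M}}(s,a)\|\rho^{E}_{\mathcal{M}}(s,a))\le \mathbb{D}_f(\rho^{\pi}_{\mathcal{M}}(z^+,a)\|\rho^{E}_{\mathcal{M}}(z^+,a))$, and chaining the two inequalities closes the proof.

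The hard part will be the second inequality, specifically justifying that the $z^+\!\to s$ channel is genuinely policy-independent. The subtlety is that a naive marginalization $\rho^{\pi}(s,a)=\int \rho^{\pi}(z^+,z^-,a)\,p(s\mid z^+,a)\,p(s\mid z^-)\,dz^+\,dz^-$ reintroduces the policy-dependent conditional $\rho^{\pi}(z^-\mid z^+,a)$, which would break DPI. I therefore expect the crux of the argument to be invoking AcT precisely to decouple $z^-$ from $(z^+,a)$, so that the effective kernel from $(z^+,a)$ to $s$ collapses to a fixed $p(s\mid z^+,a)$ and the \emph{same} channel applies to both $\rho^{\pi}$ and $\rho^{E}$. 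Once that decoupling is secured, both inequalities follow from DPI and everything else is bookkeeping about occupancy measures being preserved under fixed kernels.
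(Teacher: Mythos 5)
Your proposal is correct to the same standard of rigor as the paper's own argument, but its architecture is genuinely different. The paper constructs no kernels at all: it imports the whole chain $\mathbb{D}_f(\rho^{\pi}_{\mathcal{M}}(o,a)\,\|\,\rho^{E}_{\mathcal{M}}(o,a)) \le \mathbb{D}_f(\rho^{\pi}_{\mathcal{M}}(s,a)\,\|\,\rho^{E}_{\mathcal{M}}(s,a)) \le \mathbb{D}_f(\rho^{\pi}_{\mathcal{M}}(z,a)\,\|\,\rho^{E}_{\mathcal{M}}(z,a))$ from Theorem~1 of V-MAIL (your $\Omega$-kernel DPI step is essentially a re-derivation of part of that cited result), and then proves only the new step $\mathbb{D}_f(\rho(z,a)\,\|\,\cdot) \le \mathbb{D}_f(\rho(z^+,a)\,\|\,\cdot)$ by a likelihood-ratio cancellation: it writes $\rho_{\mathcal{M}}(z,a) = \rho_{\mathcal{M}}(z^+,a)\,P(z^-\mid z^+)$ with a single conditional $P(z^-\mid z^+)$ shared by agent and expert (this is where AcT enters), so the ratio inside $f$ collapses to $\rho^{\pi}(z^+,a)/\rho^{E}(z^+,a)$; after the cancellation the integrand is $z^-$-free, so the paper's final ``$\le$'' is in fact an equality. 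You instead bypass the intermediate latent $z$ entirely and bound $\mathbb{D}_f(\rho(s,a)\,\|\,\cdot)$ directly by $\mathbb{D}_f(\rho(z^+,a)\,\|\,\cdot)$ via a reconstruction channel $(z^+,a)\to s$; notably, the paper's appendix proof never actually invokes the stated condition $p(s\mid z,a)=p(s\mid z^+,a)\,p(s\mid z^-)$, whereas your kernel construction is the one place it would do real work (as a side remark, that condition is formally odd, being a product of two densities over the same $s$). What your route buys is a single standard DPI application with no reliance on the $z$-space bound; what it costs is marginalizing $z^-$ out of the channel, which requires exactly the crux you correctly isolate and which the paper leans on in its equality~(10): that the occupancy-measure conditional of $z^-$ given $(z^+,a)$ is a fixed, action-free, policy-independent $P(z^-\mid z^+)$. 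Be aware that both you and the paper gloss the same subtlety here: per-step independence of the $z^-$ dynamics does not by itself deliver this, since $\rho=(1-\gamma)\sum_t \gamma^t \Pr\nolimits_t$ mixes over time and conditioning on $z^+$ leaks information about $t$, hence about the time-marginal of $z^-$; one needs the $z^-$ process (initial law and action-free transitions) to be shared across agent and expert and this time-mixing correlation to be argued or assumed away. Modulo that shared gap, your plan goes through and is arguably the cleaner formulation.
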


\cref{thm:gail bound} illustrates that the divergence of occupancy measures in the observation space can be upper-bounded in the learned task-relevant latent state space. The proof is in \ref{app:theorem}. As the learned model $\widetilde{\mathcal{M}}^+$ well approximates the true MDP $\mathcal{M}^+$, we can apply the adversarial imitation learning method in $\mathcal{Z}^+$ space to minimize the divergence $f(\rho_{\widetilde{\mathcal{M}}^+}^{\pi}(z^+, a)||\rho_{\mathcal{M}^+}^E(z^+, a))$ and reduce the performance gap between the agent and the expert. The objective is
\begin{equation}
\label{eq:discrim}
    \begin{split}
        \max_{\pi}\min_{D_{\psi}} \;&\mathbb{E}_{(z^+, a)\sim \rho^E_{\mathcal{M}}}\Big[-\log D_{\psi}(z^+, a) \Big]\\
   & + \mathbb{E}_{(z^+, a)\sim \rho^{\pi}_{\widetilde{\mathcal{M}}}}\Big[-\log(1-D_{\psi}(z^+, a))\Big]
    \end{split}
\end{equation}
The discriminator $D_{\psi}$ serves as an approximated reward function that gives an estimated reward $r^t$ of the agent's state-action pair at time $t$. The agent uses the pseudo reward to fit the value function bootstrapped and maximizes the expected return. To improve sample efficiency, we only train the discriminator on the fixed expert's demonstrations and the agent's rollouts in the learned model. The agent only samples actions based on the task-relevant state. 
\begin{equation}
\label{eq:policy}
    \begin{split}
        &\text{Action model:}\; a_t\sim \pi(a_t|z^+_t)\\
        &\text{Value model:}\; v(z_t^+) \approx \mathbb{E}_{\pi(\cdot|z_t^+)}\Big[\sum_{k=t}^{T}\gamma^{k-t}\log D_{\psi}(z_k^+, a_k)\Big]\nonumber
    \end{split}
\end{equation}
The policy learning process is presented in detail in \cref{fig:policy learning}.

\begin{figure*}[ht]
\vskip 0.1in
\begin{center}
\centerline{\includegraphics[width=\linewidth]{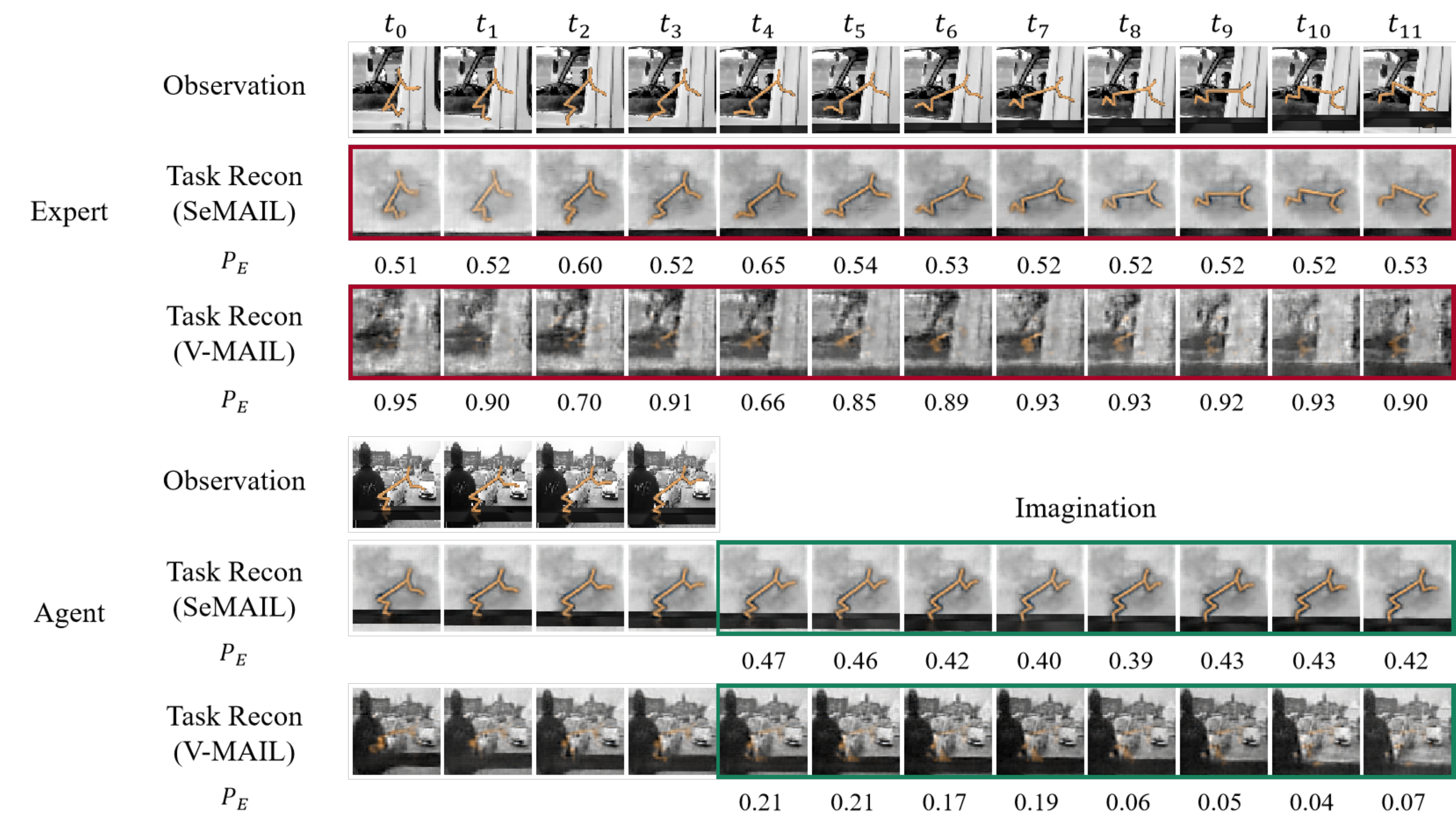}}
\caption{The adversarial imitation learning process of SeMAIL and V-MAIL. In latent space, both of SeMAIL and V-MAIL imagine the trajectory based on the observations of $t_0\sim t_3$ and imitate the expert behavior from all expert demonstrations. The reconstructions of the task model for the expert observation and the agent imagination are presented in red and green boxes, respectively. The value $P_E$ denotes the probability that discriminator $D_{\psi}$ predicts agent behaviors as expert behaviors, which is closer to 0.5 will be better.}
\label{fig:imagine}
\end{center}
\vskip -0.1in
\end{figure*}

\section{Experiments}

In our experiments part, we aim to answer the following questions:
\begin{enumerate}
\vspace{-0.15in} 
\item How good is SeMAIL's performance on learning tasks with complex distractors observations?
\item What are the roles played by the critical designs of SeMAIL in learning?
\item Could SeMAIL perform well with distractors that never appear in expert observations?
\item How well does the task-relevant space imagination benefit adversarial imitation learning?
\vspace{-0.15in}
\end{enumerate}

\textbf{Environments and Expert's demonstrations.}
We test our algorithm on six visual control tasks, \textit{i.e.}, five locomotion tasks from DeepMind Control (DMC) Suite \cite{DBLP:journals/corr/abs-1801-00690} with videos under the class        ``driving car'' of the Kinetics dataset \cite{DBLP:journals/corr/KayCSZHVVGBNSZ17} as background, and another Car Racing task from OpenAI Gym \cite{brockman2016openai}. These videos are grayscaled as in DBC \cite{DBLP:conf/iclr/0001MCGL21}. Instead of rendering the images from low-dimensional states or adding the distractors on the pure observations, we train RL policies from complex visual inputs until optimal as experts. Then we use the expert to collect demonstrations from the above environments directly. More details about environments and demonstrations are in \cref{app:details}.

\textbf{Baselines.}
 We design three versions of SeMAIL and compare them with baselines against three types of visual imitation learning approaches: model-based, data-augmented, and feature-space-shared. 
 \begin{itemize}
     \item \textbf{SeMAIL}: The full proposed method.
     \item \textbf{SeMAIL (No AcT)}: SeMAIL without the action-free constraint in the task-irrelevant model. The forward dynamics and posterior encoder of the task-irrelevant model are modified as $p_{\theta^-}(z^-_t|z^-_{t-1}, a_{t-1})$ and $q_{\psi^-}(z^-_t|o_t, z^-_{t-1}, a_{t-1})$, respectively.
     \item \textbf{SeMAIL (No BoR)}: SeMAIL without the background-only reconstruction loss.
     \item \textbf{V-MAIL}: The variational model-based adversarial imitation learning method \cite{rafailov2021visual}. 
     \item \textbf{DA-DAC}: The DrQ data augmentation \cite{DBLP:conf/iclr/YaratsKF21} version of Discriminator Actor Critic \cite{DBLP:conf/iclr/KostrikovADLT19} used in \cite{rafailov2021visual}.
     \item \textbf{DisentanGAIL}: The feature-space-shared IRL method that regularizes the latent representation with mutual information constraints \cite{DBLP:conf/iclr/CetinC21}.
 \end{itemize}

\subsection{How good is SeMAIL's performance on learning tasks with complex distractors observations?}
\label{exp: homo-source}
In \cref{fig:results}, we show the performance curves of all six visual control tasks. It is clear that SeMAIL outperforms all the baseline methods and enables high sampling efficiency in most of these tasks. SeMAIL achieves near-expert performance in Finger and Walker tasks while other methods struggle to solve them. On the Hopper Hop environment, SeMAIL has an outstanding average performance with a wider range of variance than the baselines. That is because the compared methods fail to learn the expert behavior, while for SeMAIL, there are more successful and fewer failed runs. In the Car Racing environment, V-MAIL obtains high episodic returns in the early stage, but the performance quickly drops to a value lower than SeMAIL as training goes on because the discriminator captures irrelevant parts in observations and distinguishes the agent from the expert.  

The performance on all six tasks has a considerable drop if we ignore the AcT assumption. Notably, the performances of SeMAIL (No AcT) become sharply unstable in Walker Walk and Walker Run. It indicates that dissociating task-irrelevant part from actions is crucial for task completion when some distractors have underlying dynamics. Removing BoR loss also leads to a significant performance drop on several tasks, except for Car Racing, since the distractors in this environment are not visually visible. Although removing AcT or BoR component, the average performances of SeMAIL still outperform the DA-DAC and DisentanGAIL in almost all environments. The quantitative result is in \cref{tb: homo-results}. 

\subsection{What are the roles played by the critical designs of SeMAIL in learning?}
\label{exp: ablation}

To further study the factors that cause the performance drop on SeMAIL (No AcT) and SeMAIL (No BoR), we visualize reconstructions of different semantics of observations from the agent and the expert in \cref{fig:ablations}. SeMAIL learns a distinct mask and reconstructs the observations well for both tasks and backgrounds. SeMAIL (No AcT) fails to learn an accurate mask and its two parts of the reconstruction are reversed, implying that the AcT assumption is crucial for extracting the corresponding features from $o^+$ and $o^-$. For SeMAIL (No BoR), the task model captures the task-relevant information in the correct semantics but also contains some background information, which may lead to poor performance results. Based on these ablation test results, the action-conditioned transition can help the model capture the corresponding representation with task-relevant information and majorly improves its learning ability. Background-only reconstruction tries to recover the whole background as much as possible via non-controllable latent state $z^{-}$ and can filter some irrelevant information that may be miscontained in controllable latent state $z^+$. We conclude that both of these two designs are absolutely necessary for imitation learning in environments with complex observations.

\subsection{Could SeMAIL perform well with distractors that never appear in expert observations?}
\label{exp: hete-source}
To test SeMAIL's ability by solving tasks with distractors that the expert has never seen before, we train the agent and expert in environments with non-overlap video backgrounds for Walker Run, Cheetah Run, and Finger Spin tasks. Due to the different backgrounds, the discriminator tends to distinguish the agent's observations from the expert's, which leads to low rewards and failure to learn expert behaviors. We record the mean and standard error of the maximum expected episodic returns over all the methods in tests and scale the values such that 0 represents the random agent performance and 1 represents the expert performance. The results in \cref{tb:hete-results} show V-MAIL and DA-DAC fail to solve the task with very low performances. DisentanGAIL performs well in the Finger Spin task but poorly in Walker and Cheetah tasks. It indicates that regularizing the mutual information between the agent and expert observations can mitigate the misdistinguishing problem to some extent in this experiment. The reason that SeMAIL (No BoR) unexpectedly got an outstanding performance in the Walker task may be its task model remains more observation Information, which could benefit some different background situations. Although all the methods can not reach expert performance, our method still has a substantial advantage over other baselines.

\begin{table}[t]
\caption{Performance on tasks with non-overlap video backgrounds from expert observations, scaled by the expert and random agent returns.}
\label{tb:hete-results}
\vskip 0.15in
\begin{center}
\begin{small}
\setlength\tabcolsep{1pt}
\begin{sc}
\begin{tabular}{lcccr}
\toprule
Method & \fontsize{8.5pt}{\baselineskip}\selectfont{Walker Run}  & \fontsize{8.5pt}{\baselineskip}\selectfont{Cheetah Run} & \fontsize{8.5pt}{\baselineskip}\selectfont{Finger Spin}\\
\midrule
V-MAIL          & 0.16 $\pm$ 0.01 & 0.17 $\pm$ 0.02 & 0.02 $\pm$ 0.01 \\
DA-DAC          & 0.13 $\pm$ 0.02 & 0.19 $\pm$ 0.02 & 0.09 $\pm$ 0.03 \\
DisentanGAIL    & 0.27 $\pm$ 0.04 & 0.19 $\pm$ 0.03 &  \textbf{1.47 $\pm$ 0.07} \\
SeMAIL (No AcT)  & 0.10 $\pm$ 0.01 & 0.31 $\pm$ 0.04 & 0.07 $\pm$ 0.02 \\
SeMAIL (No BoR)  & \textbf{0.84 $\pm$ 0.01} & 0.38 $\pm$ 0.03 & 0.75 $\pm$ 0.08   \\
SeMAIL           & 0.38 $\pm$ 0.04 & \textbf{0.86 $\pm$ 0.04} & 0.82 $\pm$ 0.05 \\
\bottomrule
\end{tabular}
\end{sc}
\end{small}
\end{center}
\vskip -0.1in
\end{table}

\subsection{How well does the task-relevant space imagination benefit adversarial imitation learning?}
\label{exp: imagine}
To answer this question, we visualize the reconstruction of Cheetah Run's task information for expert observations and agent imaginations by SeMAIL and V-MAIL in \cref{fig:imagine}. The agent and the expert are trained in environments with the same task but with non-overlap video backgrounds. In expert observations, SeMAIL only extracts the cheetah's body as task information, while V-MAIL reconstructs all the information from the original image. In agent observations, SeMAIL imagines its behavior in the task-relevant latent space $\mathcal{Z}^+$, while V-MAIL imagines in the latent space containing all the information. Based on the reconstruction of the imagined data, we can see SeMAIL filter out irrelevant information while V-MAIL largely retains it. To evaluate the learning ability in the adversarial imitation learning process, we use the probability of whether the discriminator can distinguish behaviors from the agent or the expert. For agent imagination using V-MAIL, the discriminator gives a near-zero probability value since it carries all observed information which leads to distinguish from the expert (as the example mentioned in \cref{fig:motivation}). The agent of V-MAIL nearly can not receive positive training signals from the discriminator and underperforms in the task as a result. The discriminator of SeMAIL scores a probability close to 0.5 based on task-relevant information, which guides the agent's policy learning toward the expert. 


\section{Conclusion}
In model-based imitation learning, irrelevant information will cause the discriminator to have deceptive biases on the reward. We introduce the Action-conditioned Transition assumption to model task-relevant and distractor dynamics separately, and we name this new approach Separated Model-based Adversarial Imitation Learning (SeMAIL). SeMAIL extracts task-correlated features from the environment, which helps mitigate the problem of current MBIL methods struggling to learn in tasks with complex distractors. We provide a theoretical demonstration that the performance gap between the expert and the agent can be upper-bounded in task-relevant space. We verify the performance of SeMAIL on six frequently-used continuous control tasks, with complex distractors in observations. Further, we design ablation studies and visualize the adversarial imitation learning process to display the contribution of the key components. We conclude that SeMAIL can provide a relatively clean task-relevant latent space for the adversarial imitation learning process and largely improve performance on VIL tasks with complex observations. 

In our experiments, we claim that irrelevant information in most environments is a distraction to the agent's action decision and try to eliminate it. In particular cases, the irrelevant distractor may become task-relevant in a certain condition and influence the agent's decision-making. In this case, we can add a short period of the particular distractor states in the policy function and train the agent. We will consider this improvement in future work. Another further improvement is training the task-irrelevant models of the expert and the agent separately, which may benefit the scenario where distractors in their observations are majorly different.

\section*{Acknowledgements}
We would like to thank Han-Jia Ye, Qi-Wei Wang, Shaowei Zhang, Ziyuan Chen, and Jiayi Wu for valuable discussions. This work was supported by the National Key R$\And$D Program of China (2022ZD0114805).

\nocite{langley00}

\bibliography{my_paper}
\bibliographystyle{icml2023}

\newpage
\appendix
\onecolumn
\section{Proof}


\subsection{Proof of \cref{thm:gail bound}}\label{app:theorem}
\begin{theorem}
Consider a POMDP $\mathcal{M}$ with high-dimensional inputs such as images. Let $s_t$ be the ground-truth state and $z_t$ be the latent representation of the whole observation $o_t$. $z_t^+$ and $z_t^-$ represent for the task-relevant and irrelevant component, respectively, which meets the condition of $p(s_t|z_t, a_t) = p(s_t|z_t^+, a_t)p(s_t|z_t^-)$ based on the AcT assumption mentioned in \cref{rssm2}. Then, we have
\begin{equation}
    \mathbb{D}_f(\rho^{\pi}_\mathcal{M}(o, a)||\rho^{E}_\mathcal{M}(o, a)) \le 
     \mathbb{D}_f(\rho_{\mathcal{M}}^{\pi}(s, a)||\rho_{\mathcal{M}}^E(s, a))
     \le \mathbb{D}_f(\rho_{\mathcal{M}}^{\pi}(z^+, a)||\rho_{\mathcal{M}}^E(z^+, a))
\end{equation}
where $\mathbb{D}_f$ is a generic $f$-divergence.
\end{theorem}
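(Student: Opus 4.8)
The plan is to recognize both inequalities as instances of the \textbf{data processing inequality} (DPI) for $f$-divergences: for any Markov kernel $K$ and any two input distributions $P,Q$, one has $\mathbb{D}_f(K\#P\,\|\,K\#Q)\le \mathbb{D}_f(P\,\|\,Q)$, where $K\#P$ denotes the pushforward. The three representations form a chain $(z^+,a)\to(s,a)\to(o,a)$ built from \emph{policy-independent} kernels — only the policy differs between $\pi$ and $E$, while the dynamics, the observation model, and the encoders are fixed. So it suffices to exhibit the correct kernel at each step, confirm it sends the agent occupancy measure to the coarser one (and likewise for the expert), and invoke DPI.

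First I would handle $\mathbb{D}_f(\rho^\pi_{\mathcal M}(o,a)\|\rho^E_{\mathcal M}(o,a))\le\mathbb{D}_f(\rho^\pi_{\mathcal M}(s,a)\|\rho^E_{\mathcal M}(s,a))$. The POMDP observation model $\Omega(o\mid s,a)$ defines a kernel $K_1\big((o,a)\mid(s,a)\big)=\Omega(o\mid s,a)$ that leaves the action coordinate untouched and is shared by both agent and expert. Since occupancy measures satisfy $\rho^\pi_{\mathcal M}(o,a)=\int \Omega(o\mid s,a)\,\rho^\pi_{\mathcal M}(s,a)\,\mathrm{d}s$ (and identically for $E$), we have $\rho^\pi_{\mathcal M}(o,a)=K_1\#\rho^\pi_{\mathcal M}(s,a)$, and applying DPI with $K_1$ gives the first bound. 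This step is routine.

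The second inequality $\mathbb{D}_f(\rho^\pi_{\mathcal M}(s,a)\|\rho^E_{\mathcal M}(s,a))\le\mathbb{D}_f(\rho^\pi_{\mathcal M}(z^+,a)\|\rho^E_{\mathcal M}(z^+,a))$ is the crux and is where the AcT factorization enters. Here I would build the kernel $K_2\big((s,a)\mid(z^+,a)\big)=p(s\mid z^+,a)$ and show that pushing $\rho^\pi_{\mathcal M}(z^+,a)$ through it recovers $\rho^\pi_{\mathcal M}(s,a)$. Starting from the joint occupancy measure $\rho^\pi_{\mathcal M}(s,z^+,z^-,a)$ and marginalizing, the target identity is $\rho^\pi_{\mathcal M}(s,a)=\int p(s\mid z^+,a)\,\rho^\pi_{\mathcal M}(z^+,a)\,\mathrm{d}z^+$. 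The AcT hypothesis $p(s\mid z,a)=p(s\mid z^+,a)\,p(s\mid z^-)$ is exactly what lets me eliminate the dependence on $z^-$: it encodes that, given $(z^+,a)$, the distractor latent carries no further information about the state, so integrating out $z^-$ collapses $p(s\mid z^+,z^-,a)$ onto $p(s\mid z^+,a)$. Because $p(s\mid z^+,a)$ depends only on the fixed environment and not on the policy, the same $K_2$ maps $\rho^E_{\mathcal M}(z^+,a)$ to $\rho^E_{\mathcal M}(s,a)$, and a second application of DPI yields the bound.

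The main obstacle I anticipate is precisely this second step: the AcT condition is written as a \emph{product} $p(s\mid z^+,a)\,p(s\mid z^-)$ rather than a clean conditional independence $p(s\mid z,a)=p(s\mid z^+,a)$, so I would need to verify that the product form still induces a genuine, policy-independent Markov kernel from $(z^+,a)$ to $(s,a)$ whose pushforward reproduces the correct state marginal — in particular checking the normalization carried by the $p(s\mid z^-)$ factor when $z^-$ is integrated out. Once that kernel is justified, both inequalities follow mechanically from the DPI, and the generality over arbitrary $f$-divergences is automatic, since the DPI holds for every $f$-divergence.
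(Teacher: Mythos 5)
Your overall DPI framing is sound, and your first inequality is exactly how the paper handles it: the chain $\mathbb{D}_f(\rho_{\mathcal M}(o,a)\|\cdot)\le\mathbb{D}_f(\rho_{\mathcal M}(s,a)\|\cdot)\le\mathbb{D}_f(\rho_{\mathcal M}(z,a)\|\cdot)$ is imported wholesale from V-MAIL's Theorem~1, whose proof is the observation-kernel argument you describe. The divergence is at the crux. The paper does \emph{not} build a kernel from $(z^+,a)$ to $(s,a)$; it keeps the full latent $z=(z^+,z^-)$ as an intermediate layer and proves $\mathbb{D}_f(\rho_{\mathcal M}(z,a)\|\cdot)\le\mathbb{D}_f(\rho_{\mathcal M}(z^+,a)\|\cdot)$ by factorizing the occupancy measure as $\rho_{\mathcal M}(z,a)=\rho_{\mathcal M}(z^+,a)P(z^-\mid z^+)$, with the conditional $P(z^-\mid z^+)$ action-free and \emph{shared by $\pi$ and $E$} (this is where AcT enters), so the likelihood ratio inside $f$ collapses to $\rho^{\pi}(z^+,a)/\rho^{E}(z^+,a)$ and the $z^-$-expectation integrates out; once the ratio is $z^-$-free, the paper's final inequality is in fact an equality (the sufficiency direction of DPI). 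Notably, the product hypothesis $p(s\mid z,a)=p(s\mid z^+,a)\,p(s\mid z^-)$ whose normalization you rightly find awkward is never invoked in that form in the paper's proof; all the work is done by the factorization of $\rho(z,a)$.

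Your direct route $(z^+,a)\to(s,a)$ via $K_2=p(s\mid z^+,a)$ carries a genuine unresolved burden that the paper's detour through $z$ is structured to avoid: you must show that the state posterior given only the task-relevant latent is a single, policy-independent kernel. In a POMDP this is not automatic, since $p^{\pi}(s\mid z^+,a)=\int p(s\mid z^+,z^-,a)\,P^{\pi}(z^-\mid z^+,a)\,\mathrm{d}z^-$; policy-independence of your $K_2$ therefore requires both that the full-latent posterior $p(s\mid z,a)$ is policy-independent (the belief-state property V-MAIL assumes for $z$) \emph{and} that $P(z^-\mid z^+,a)=P(z^-\mid z^+)$ is shared across policies --- which are exactly the ingredients of the paper's argument, after which the cancellation proof is available anyway and $s$ need never appear in the new step. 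You flag the normalization of the product form as the main obstacle, but the real one is the policy-independence of the coarsened posterior, which your proposal asserts (``depends only on the fixed environment'') without justification. If you add those two facts as explicit lemmas, your two-kernel DPI chain closes and is arguably cleaner than the paper's; as written, the crux step is assumed rather than proved.
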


\begin{proof} 
We get the result that $\mathbb{D}_f(\rho^{\pi}_\mathcal{M}(o, a)||\rho^{E}_\mathcal{M}(o, a)) \le \mathbb{D}_f(\rho^{\pi}_\mathcal{M}(s, a)||\rho^{E}_\mathcal{M}(s, a))\le \mathbb{D}_f(\rho^{\pi}_\mathcal{M}(z, a)||\rho^{E}_\mathcal{M}(z, a))$ from Theorem 1 which is proved in Rafael Rafailov's work \yrcite{rafailov2021visual}. Next, we need to prove $\mathbb{D}_f(\rho^{\pi}_\mathcal{M}(z, a)||\rho^{E}_\mathcal{M}(z, a)) \le \mathbb{D}_f(\rho^{\pi}_\mathcal{M}(z^+, a)||\rho^{E}_\mathcal{M}(z^+, a))$, which means that the gap between the agent and the expert occupancy measures upper-bounded in the latent state space $\mathcal{Z}$ can also be upper-bounded in the task-relevant state space $\mathcal{Z}^+$. 

\begin{align}
\mathbb{D}_f(\rho_\mathcal{M}^\pi(z,a)||\rho_\mathcal{M}^E(z,a)) &= \mathbb{E}_{z,a\sim\rho_{\mathcal{M}}^{E}(z,a)}\bigg[f\bigg(\frac{\rho_{\mathcal{M}}^{\pi}(z,a)}{\rho_{\mathcal{M}}^{E}(z,a)}\bigg)\bigg)\bigg]\\
&=\mathbb{E}_{z^+,z^-,a\sim\rho_{\mathcal{M}}^{E}(z,a)}\bigg[f\bigg(\frac{\rho_{\mathcal{M}}^{\pi}(z^+,a)P(z^{-}|z^+)}{\rho_{\mathcal{M}}^{E}(z^+,a)P(z^{-}|z^+)}\bigg)\bigg)\bigg]\\
&=\mathbb{E}_{z^+,z^-,a\sim\rho_{\mathcal{M}}^{E}(z,a)}\bigg[f\bigg(\frac{\rho_{\mathcal{M}}^{\pi}(z^+,a)}{\rho_{\mathcal{M}}^{E}(z^+,a)}\bigg)\bigg)\bigg]\\
&\le\mathbb{E}_{z^+,a\sim\rho_{\mathcal{M}}^{E}(z^+,a)}\bigg[f\bigg(\frac{\rho_{\mathcal{M}}^{\pi}(z^+,a)}{\rho_{\mathcal{M}}^{E}(z^+,a)}\bigg)\bigg)\bigg]\\
&=\mathbb{D}_f(\rho_\mathcal{M}^\pi(z^+,a)||\rho_\mathcal{M}^E(z^+,a))
\end{align}

The equality (10) follows the fact that $\rho_{\mathcal{M}}(z,a) =\rho_M(z^+,z^-,a)= \rho_{\mathcal{M}}(z^+,a)P(z^{-}|z^+,a)=\rho_{\mathcal{M}}(z^+,a)P(z^{-}|z^+)$, which means the task-relevant and -irrelevant parts can be partitioned, as mentioned in \cref{rssm2}. The inequality (12) is deflated with the help of our AcT assumption, which assumes that the forward dynamics in $\mathcal{Z}$ can be decoupled into two independent forward dynamics in $\mathcal{Z}^+$ and $\mathcal{Z}^-$.

\end{proof}

\section{Derivations}\label{app:deriv}
Previous model-based RL studies \cite{hafner2019dream,DBLP:conf/iclr/HafnerL0B21} use the information bottleneck objective \cite{DBLP:conf/nips/TishbyS00} to encourage model states to predict observations and rewards while limiting the capacity of information that contained in states. We remove the reward prediction in the objective for model-based imitation learning as follows:
\begin{equation}
    \max \text{I}(o_{1:T}, z_{1:T}|a_{1:T}) - \beta \text{I}(i_{1:T}, z_{1:T}|a_{1:T})
\end{equation}
Here, $i_t$ is indices of the dataset such that $p(o_t|i_t) = \delta(o_t-o_t^{\prime})$. The first term can be simplified using the mutual information definition and the non-negativity of 
the KL-divergence.

\begin{align}
    \text{I}(o_{1:T},z_{1:T}|a_{1:T})
    =~& \mathbb{E}_{p(o_{1:T}, z_{1:T}, a_{1:T})}\left[\ln p(o_{1:T}|z_{1:T}, a_{1:T})-\ln p(o_{1:T}|a_{1:T})\right]\label{eq:reduce_const}\\
    \overset{+}{=}~& \mathbb{E}_{p(o_{1:T}, z_{1:T}, a_{1:T})}\left[\ln p(o_{1:T}|z_{1:T}, a_{1:T})\right] \\
    \ge~& \mathbb{E}_{p(o_{1:T}, z_{1:T}, a_{1:T})}\left[\ln p(o_{1:T}|z_{1:T}, a_{1:T})\right] - \mathbb{D}_{\text{KL}}\Big(p(o_{1:T}|z_{1:T}, a_{1:T})||\prod_{t=1}^Tq(o_t|z_t)\Big)\\
    =~& \mathbb{E}_{q(z_{1:T}|o_{1:T}, a_{1:T})}[\sum_{t=1}^T\ln q(o_t|z_t)]\\
    =~& \sum\limits_{t=1}^{T}\left[\mathbb{E}_{q(z_{t}^+|o_{1:t},a_{1:t-1})q(z_{t}^-|o_{1:t})}\ln q(o_{t}|z_{t}^+,z_{t}^-)\right]\label{eq:recon}
\end{align}
The second term in the equality (\ref{eq:reduce_const}) can be regarded as constant for the observed data. The equality (\ref{eq:recon}) is obtained by the fact that the representation of observation $z_t$ can be decoupled into the task-relevant part $z^+_t$ and irrelevant part $z^-_t$.

For the second term, we obtain the upper bound of it with the non-negativity of the KL-divergence and the AcT assumption below:

\begin{align}
\text{I}(z_{1:T},i_{1:T}|a_{1:T}) =~ & \mathbb{E}_{p(o_{1:T}, z_{1:T}, a_{1:T},i_{1:T})}\left[\sum_{t=1}^T\ln p(z_t|z_{t-1},a_{t-1},i_t)-\ln p(z_t|z_{t-1}, a_{t-1})\right]\\
\le~&\mathbb{E}_{q(z_{1:T}|o_{1:T},a_{1:T-1})}\left[\sum_{t=1}^{T}\ln\frac{q(z_{t}|o_{t},z_{t-1},a_{t-1})}{p(z_{t}|a_{t-1},z_{t-1})}\right]\\
=~&\mathbb{E}_{q(z_{1:t-1}|o_{1:t-1},a_{1:t-2})}\left[\sum\limits_{t=1}^{T}\mathbb{E}_{q(z_{t}|o_{t},z_{t-1},a_{t-1})}\ln\frac{q(z_{t}|o_{t},z_{t-1},a_{t-1})}{p(z_{t}|a_{t-1},z_{t-1})}\right]\\
=~&\mathbb{E}_{q(z_{1:t-1}|o_{1:t-1},a_{1:t-2})}\left[\sum\limits_{t=1}^{T}\mathbb{E}_{q(z_{t}|o_{t},z_{t-1},a_{t-1})}\ln\frac{q(z_{t}^+|o_{t},z_{t-1}^+,a_{t-1})q(z_{t}^-|o_{t},z_{t-1}^-)}{p(z_{t}^+|z_{t-1}^+,a_{t-1})p(z_{t}^-|z_{t-1}^-)}\right]\\
=~&\mathbb{E}_{q(z_{1:t-1}^+|o_{1:t-1},a_{1:t-2})}\left[\sum\limits_{t=1}^{T}\mathbb{E}_{q(z_{t}^+|o_{t},z_{t-1}^+,a_{t-1})}\ln\frac{q(z_{t}^+|o_{t},z_{t-1}^+,a_{t-1})}{p(z_{t}^+|z_{t-1}^+,a_{t-1})}\right] \nonumber \\ 
&+\mathbb{E}_{q(z_{1:t-1}^-|o_{1:t-1})}\left[\sum\limits_{t=1}^{T}\mathbb{E}_{q(z_{t}^-|o_{t},z_{t-1}^-)}\ln\frac{q(z_{t}^-|o_{t},z_{t-1}^-)}{p(z_{t}^-|z_{t-1}^-)}\right]\\
=~&\sum\limits_{t=1}^{T}\Big(\mathbb{E}_{q(z_{t-1}^+|o_{1:t-1},a_{1:t-2})}\left[\mathbb{D}_{\text{KL}}({q(z_{t}^+|o_{t},z_{t-1}^+,a_{t-1})}\|{p(z_{t}^+|z_{t-1}^+,a_{t-1})})\right] \nonumber \\
&+\mathbb{E}_{q(z_{t-1}^-|o_{1:t-1})}\left[\mathbb{D}_{\text{KL}}({q(z_{t}^-|o_{t},z_{t-1}^-)}\|{p(z_{t}^-|z_{t-1}^-)})\right]\Big)
\end{align}
In practice, we use two pairs of observation decoders $q_{\phi^+}$ and $q_{\phi^-}$, forward dynamics models $p_{\theta^+}$ and $p_{\theta^-}$, and variational posterior models $q_{\psi^+}$ and $ q_{\psi^-}$ for the task-relevant and irrelevant branches respectively. We obtain the final objective of the separated models to be optimized as follows:
\begin{equation}
\begin{split}
    \max_{\theta^{+},\theta^{-}, \psi^{+}, \psi^{-}\atop \phi^{+}, \phi^{-}, \tilde{\phi}} \quad\mathbb{E}_{(o_{\tau},a_{\tau})\sim\mathcal{B}_{\pi}\cup\mathcal{B}_E}&\Big[\sum_{t=1}^T\big(\mathbb{E}_{q(z_{t-1}^-|o_{1:t-1})}[-\mathbb{D}_{\text{KL}}\left(q_{\psi^-}(z^-_t|o_t, z^-_{t-1})||p_{\theta^-}(z^-_t|z^-_{t-1})\right)]\\
    +&\mathbb{E}_{q(z_{t-1}^+|o_{1:t-1},a_{1:t-2})}[-\mathbb{D}_{\text{KL}}\left(q_{\psi^+}(z^+_t|o_t, z^+_{t-1}, a_{t-1})||p_{\theta^+}(z^+_t|z^+_{t-1}, a_{t-1})\right)]\\
    +&\mathbb{E}_{q(z_{t}^+|o_{1:t},a_{1:t-1})q(z_{t}^-|o_{1:t})}[\ln q_{\phi^{+}, \phi^{-}}(o_{t}|z^+_{t}, z^-_{t}) +\lambda\ln q_{\tilde{\phi}}(o_{t}|z^-_{t})]\big)\Big]
\end{split}
\end{equation}
where we add the background-only reconstruction loss mentioned in \cref{observation reconstruction}.

\section{Implementation Details}
\label{app:details}
\subsection{Environments and Tasks}
\begin{figure*}[t]
\vskip 0.2in
\begin{center}
\centerline{\includegraphics[width=\linewidth]{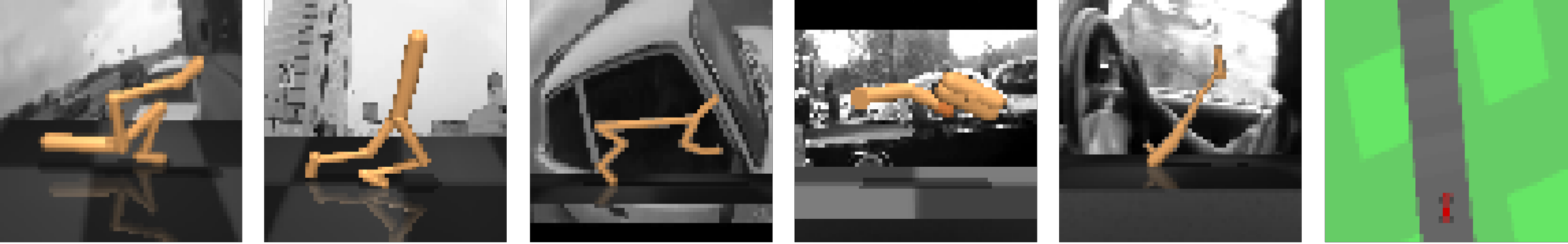}}
\caption{The environments used in our experiments: Walker Walk, Walker Run, Cheetah Run, Finger Spin, Hopper Hop, and Car Racing, where the first five replace backgrounds with natural videos except for Car Racing.}
\label{fig:env}
\end{center}
\vskip -0.2in
\end{figure*}

DeepMind Control Suite (DMC) \cite{DBLP:journals/corr/abs-1801-00690} is a set of reinforcement learning environments that includes a range of tasks, such as locomotion, manipulation, and navigation. We choose five locomotion tasks from DMC and replace the background wall with the grayscaled videos in the ``driving car'' class from the kinematic dataset \cite{DBLP:journals/corr/KayCSZHVVGBNSZ17}. These natural video backgrounds contain complex task-irrelevant information. The Car Racing environment from OpenAI Gym \cite{brockman2016openai} is a classic control task from pixels. The agent must learn to control the speed, steering, and other parameters to maximize its cumulative rewards. We cut out the bottom part of the frames to avoid the agent learning directly from the reward signals. The size of image observation in all environments is $64\times 64\times 3$. The example observations of these environments are shown in \cref{fig:env}.

In the first part of our experiments, we test our method SeMAIL and the baselines on all six environments. On the five environments of DMC, the agent and the expert observations background walls are replaced with the frames from the same videos. We design this task to verify SeMAIL's learning ability in environments with complex and noisy observations. We adopt the original image input for the Car Racing environment because it already contains much irrelevant information as time-varying blocks in observations. 

In the following experiment, we use Walker Run, Cheetah Run, and Finger Spin tasks to test SeMAIL's ability with backgrounds not appeared in expert observations. To achieve this, we design two non-overlap environments for each task. We train the expert in the first eight videos under the "driving car" class and collect the demonstration, then we train the agent in the last eight videos in the same class with the expert demonstration.

\subsection{Demonstration Data}
To make sure agent actions can be inferred from image inputs with distractors, we train an expert from scratch for each task and obtain the demonstration data. We train the expert with TIA \cite{DBLP:conf/icml/FuYAJ21} for locomotion tasks from DMC and train it with Dreamer \cite{hafner2019dream} for the Car Racing task. We prepare for all the methods with 10 expert demonstrations in each task.

\subsection{Pseudo Code}
The pseudo-code of our proposed SeMAIL is provided in \cref{alg:SeMAIL}.
\begin{algorithm}
   \caption{Training Procedure of SeMAIL}
   \label{alg:SeMAIL}
\begin{algorithmic}
   \STATE {\bfseries Input:} Policy replay buffer $\mathcal{B}_{\pi}$, Expert demonstrations $\mathcal{B}_{E}$\\
    \STATE Initialize forward dynamics model $p_{\theta^+}, p_{\theta^-}$, posterior encoder $q_{\psi^+}, q_{\psi^-}$, observation decoder $q_{\phi^+}, q_{\phi^-}, q_{\tilde{\phi}}$, policy $\pi$.
    \FOR{each time step $t=1\cdots T$}
        \STATE {// Rollout trajectories}
        \STATE {Infer the task-relevant latent state $z_t^+\sim q_{\psi^+}(\cdot| o_t, z_{t-1}^+, a_{t-1})$}
        \STATE {Sample action from policy $a_{t}\sim \pi(\cdot| z_t^+)$}
        \STATE {Execute action and get the next observation $o_{t+1}\leftarrow \text{env.step}(a_t)$}
    \ENDFOR
    \STATE {Add samples into the replay buffer $\mathcal{B}_{\pi}\leftarrow \mathcal{B}_{\pi}\cup \{(o_t, a_t)_{t=1}^T\}$}
    \FOR{training iteration $i=1\cdots \text{It}$}
        \STATE {// Learn separated models}
        \STATE {Sample minibatch $(o_{1:T}, a_{1:T-1})_{1:b}$ from the union buffer $\mathcal{B}_{\pi}\cup\mathcal{B}_E$}
        \STATE {Update the forward dynamics model $p_{\theta^+}, p_{\theta^-}$ and the posterior encoder $q_{\psi^+}, q_{\psi^-}$ with \cref{eq:rssm2}}
        \STATE {Update the observation decoder $q_{\phi^+}, q_{\phi^-}, q_{\tilde{\phi}}$ with \cref{eq:obz_rec}}
        \STATE {// Optimize policy}
        \STATE {Imagine the task-relevant latent states $z^{\pi+}_{1:H}$ by policy $\pi$ using the forward dynamics model $p_{\theta^+}$}
        \STATE {Sample expert's trajectories $(o_{1:T}^E, a_{1:T-1}^E)$ from demonstration buffer $\mathcal{B}_E$}
        \STATE {Infer the task-relevant latent states $z^{E+}_{1:T}$ using the posterior encoder $q_{\psi^+}$}
        \STATE {Train the discriminator on the data $(z^{\pi+}_{1:H-1}, a^{\pi}_{1:H-1}), (z^{E+}_{1:T-1}, a^E_{1:T-1})$ using \cref{eq:discrim}}
        \STATE {Update the policy $\pi$ to imitate the expert's behavior using \cref{eq:policy}}
    \ENDFOR
\end{algorithmic}
\end{algorithm}


\subsection{Networks and Hyperparameters}\label{app:detail net}
\textbf{Implementation.} We implement the proposed algorithm with TensorFlow 2 and run all the experiments on NVIDIA RTX 3090  for about 1000 GPU hours. We use the recurrent state space model \cite{DBLP:conf/icml/HafnerLFVHLD19} for the forward dynamics and the posterior encoder. The hidden sizes for the deterministic part and stochastic part are 200 and 30. We adopt the convolutional encoder and decoder used in TIA \cite{DBLP:conf/icml/FuYAJ21}. The size of all dense layers is 300, and the activation function is ELU. We use ADAM optimizer to train the network with batches of 64 sequences of length 50. The learning rate for the task and background model is 6e-5, and for the action net, value net, and discriminator is 8e-5. We clip gradient norms to 100 to stabilize the training process. To prevent training a too-strong discriminator, we add a gradient penalty term \cite{DBLP:conf/nips/GulrajaniAADC17} on the discriminator loss and set the value of weight as 1.0. The codes of SeMAIL are released in \href{https://github.com/yixiaoshenghua/SeMAIL}{https://github.com/yixiaoshenghua/SeMAIL}.

\textbf{Environment Hyperparameters.} For Walker Walk, Walker Run, Cheetah Run, Finger Spin, Hopper Hop, and Car Racing tasks, the values of background-only reconstruction $\lambda$ are 1.5, 0.25, 2, 1.5, 2, and 1, respectively. To encourage exploration, we add $\mathcal{N}(0, 0.3)$ noise on the output actions for locomotion tasks and $\mathcal{N}(0, 0.1)$ noise for Car Racing as used in \cite{rafailov2021visual}. The imagination horizon $H$ for locomotion tasks is 15, and for Car Racing is 10.

\textbf{Training Details.} To make this a fair comparison, all environments and network parameters are kept the same over SeMAIL and the compared algorithms. We initialize the dataset with 5 randomly collected episodes and train 100 iterations after collecting one episode in environments. We keep the action repeat times as 2 and set the discounting factor as 0.99 for all tasks. On five locomotion tasks, we adopt the official implementation for DisentanGAIL \cite{DBLP:conf/iclr/CetinC21}, which obtains pseudo rewards from the discriminator on the agent observations and learns policy based on the raw state observations.

\section{Extra Experimental Results}
\subsection{Quantitative Results}
\label{app:quanti_res}
In \cref{tb: homo-results}, we show the quantitative results of the experiments in \cref{exp: homo-source}. SeMAIL significantly outperforms the baselines in almost all tasks and achieves near-expert performance. Without the AcT assumption or removing the BoR loss, the performances of SeMAIL in locomotion tasks significantly decrease in either case. On the Car Racing task, the performance of SeMAIL (No BoR) outperforms SeMAIL, and V-MAIL achieves better performance than on other tasks. It is because the background of the car racing environment is relatively simple, and the task-relevant part takes up the large proportion. DisentanGAIL performs well on Finger Spin but fails on other tasks due to its agent actions change little in observations, which regularizing the mutual information between the expert and the agent data can help solve this task.

\begin{table}
\caption{Performance on six visual control tasks. We present the mean and std of final performance by running 10 trajectories over 4 seeds for SeMAIL and the baselines.}
\label{tb: homo-results}
\vskip 0.15in
\begin{center}
\begin{small}
\begin{sc}
\begin{tabular}{lccccccr}
\toprule
Method & Walker Walk & Walker Run & Cheetah Run & Finger Spin & Hopper Hop & Car Racing\\
\midrule
Expert          & 964.5 $\pm$ 14.7 & 539.5 $\pm$ 6.5 & 629.5 $\pm$ 44.6 & 332.0 $\pm$ 24.4 & 263.2 $\pm$ 7.4 & 936.3 $\pm$ 5.7 \\
V-MAIL          & 314.9 $\pm$ 295.6 & 155.1 $\pm$ 73.9 & 74.6 $\pm$ 48.9 & 3.3 $\pm$ 4.8 & 0.2 $\pm$ 1.0 & 537.8 $\pm$ 228.4 \\
DA-DAC          & 25.4 $\pm$ 13.3 & 27.7 $\pm$ 21.2 & 32.9 $\pm$ 17.0 & 0.1 $\pm$ 0.6 & 0.0 $\pm$ 0.1 & -57.9 $\pm$ 38.7  \\
DisentanGAIL    & 61.0 $\pm$ 6.6 & 68.7 $\pm$ 13.4 & 36.6 $\pm$ 17.3 & 133.2 $\pm$ 77.9 & 1.0 $\pm$ 1.1 & -62.0 $\pm$ 20.0  \\
SeMAIL (No AcT)  & 124.9 $\pm$ 14.1 & 100.0 $\pm$ 7.4 & 71.9 $\pm$ 4.4 & 30.5 $\pm$ 4.3  & 66.3 $\pm$ 8.5 &  24.1 $\pm$ 36.2  \\
SeMAIL (No BoR)  & 93.7 $\pm$ 15.5 & 337.5 $\pm$ 14.7 & 72.9 $\pm$ 5.8 & 23.4 $\pm$ 5.4 & 19.1 $\pm$ 2.1 & \textbf{942.9 $\pm$ 1.6} \\
SeMAIL    & \textbf{900.1 $\pm$ 57.6} & \textbf{463.4 $\pm$ 42.7} & \textbf{217.9 $\pm$ 125.8} & \textbf{161.1 $\pm$ 68.7} & \textbf{90.5 $\pm$ 75.6} & 901.2 $\pm$ 19.8\\
\bottomrule
\end{tabular}
\end{sc}
\end{small}
\end{center}
\vskip -0.1in
\end{table}

\subsection{Additional Visualization}
\label{app:add_visual}

We use the t-SNE plot \cite{JMLR:v9:vandermaaten08a} to visualize training result through the agent's and expert's observation embeddings of SeMAIL and V-MAIL in \cref{fig:tsne_vmail_SeMAIL}. We randomly select images from observations of the initial stage, the learned policy, and the expert, then generate the latent representation $z^+\sim q^+_\phi(z^+|o)$ from SeMAIL and $z\sim q(z|o)$ from V-MAIL. Since misguided by complex distractors, the embeddings cluster of V-MAIL shows a large gap between the trained agent and the expert. In SeMAIL, the sample embeddings almost cover the representation space of the expert observations, which indicates a well-trained result through our distractor-eliminating policy. It confirms that SeMAIL is qualified to solve real-world cases which may contain complex distractors.

\begin{figure*}[t]
\vskip 0.2in
\begin{center}
\centerline{\includegraphics[width=\linewidth]{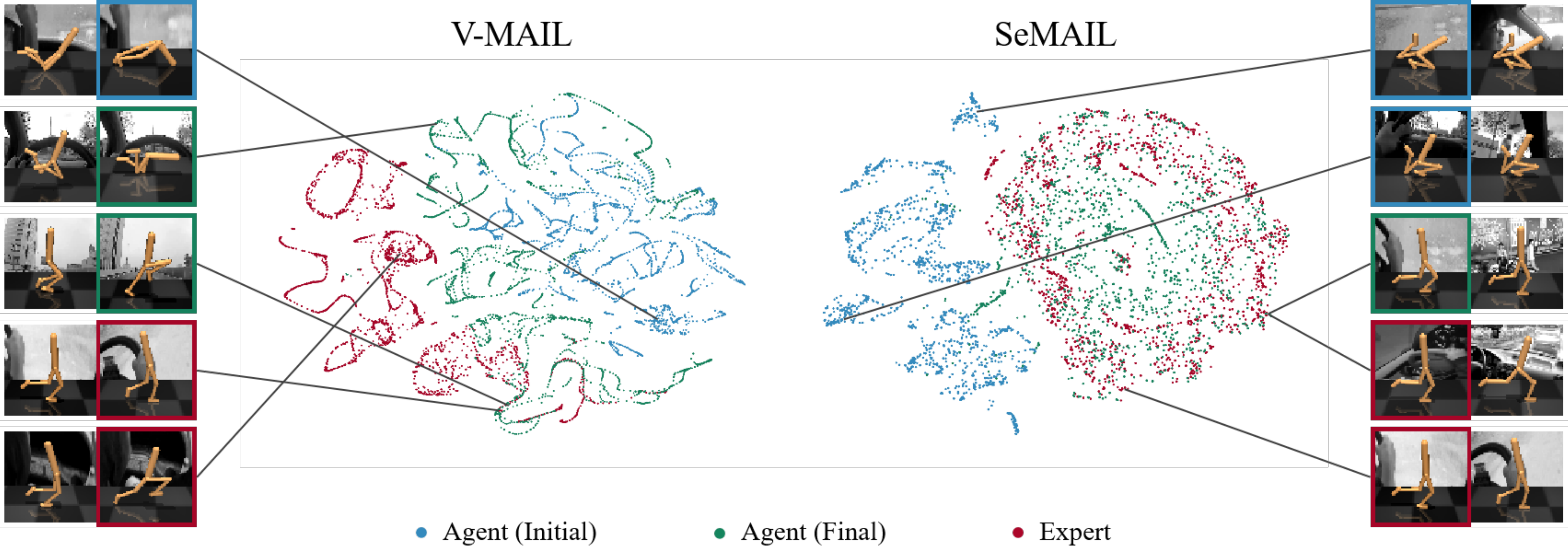}}
\caption{The t-SNE plot of data representations of the initial agent, the final learned agent, and the expert. Compared with the V-MAIL plot on the left, the embedding of our method SeMAIL shows that the learned agent behavior blends well with the expert, which indicates well-trained by eliminating distractors. }
\label{fig:tsne_vmail_SeMAIL}
\end{center}
\vskip -0.2in
\end{figure*}

\section{Additional Discussions}

\subsection{Pre-trained Visual Encoders}\label{app:add pretrain}
Equipping visual RL with a pre-trained encoder is an exciting direction like recent research \cite{DBLP:journals/corr/abs-2203-06173,DBLP:conf/icml/SeoLJA22}. In this paper, we are concerned about the practical problem in visual imitation learning (VIL) - how to extract task-relevant information from noisy observations without access to ground-truth rewards. To solve this, a good alignment between the agent and the expert behaviors (by separating the dynamics of task-relevant and -irrelevant parts) is a more important contributor than a good representation of observations (by pre-training or data augmentation). The compared method in our experiments, DA-DAC, only considers obtaining a good representation via augmenting the observed data, and its agent finally receives the wrong guidance from the expert and fails to complete the tasks (see the quantitative results in \cref{tb: homo-results}). We leave the pre-training approach combined with the SeMAIL framework for future exploration. 

\subsection{The Rationality of POMDP Assumption}\label{app:BMDP}

POMDP \cite{kaelbling:aij98} addresses that the agent can not directly access the ground truth states but only infer the belief states from observations. In previous RL work \cite{rafailov2021visual,DBLP:conf/iclr/BharadhwajBEL22} using the similar environments with us, they adopted POMDP as their basic environmental assumption. We follow their assumption because we aim to solve imitation learning tasks with noisy observations, which have some common ground with theirs. As detailed in \cref{app:detail net}, our transition model is built upon a recurrent state space model, which will generate belief hidden states from historical rollouts for policy action outputs. BMDP \cite{DBLP:conf/icml/DuKJAD019} highlights the block structure that each context determines its generating state uniquely, which can be regarded as an alternative formulation of our problem. We highly encourage researchers with an interest in BMDP to consult the original paper.

\subsection{The Practice of the AcT Assumption}\label{app:AcT} 
The AcT assumption claims that the forward dynamics $p(z_t|z_{t-1},a_{t-1})$ can be \textbf{independently} decomposed into the product of task-relevant dynamics $p(z_t^+ \mid z_{t-1}^{+}, a_{t-1})$ and -irrelevant dynamics $p(z_{t}^-|z_{t-1}^-)$, and each one is implemented by a stochastic probabilistic model. Recent studies, including Denoised MDPs \cite{DBLP:conf/icml/0001D0IZT22} and Iso-Dream \cite{NEURIPS2022_9316769a}, consider modeling the environmental dynamics using controllable information and pre-defined rewards. Denoised MDPs categorize information with controllablity and reward relativity. Iso-Dream posits that the latent states can be separated into controllable and noncontrollable part, while the agent makes decisions influenced by future noncontrollable states and subsequently affecting controllable states. Motivated by experimental observations and in light of the computational savings afforded by the independence hypothesis, we propose the AcT assumption. In contrast to previous work, the AcT assumption is stronger, positing that task-relevant and -irrelevant dynamics are independent. During the initial phase of training, due to the limited data collected by the agent, the separated models primarily rely on expert data for training. Because expert policies are optimal and focus only on task-relevant information to make actions unaffected by distractors, task-relevant and -irrelevant models can be considered independent at this stage. As training progress progresses and the agent’s policy approaches optimality, the agent becomes more resistant to irrelevant information interference, which supports the independence assumption in AcT. To testify the necessity of expert data for the AcT assumption in model learning of SeMAIL, we conduct an additional experiment in which separated models are trained without expert data while all other settings remained consistent. This method is denoted as SeMAIL (No Exp-M) in \cref{tb: act-results}. SeMAIL (No Exp-M) exhibits a significant performance decline, indicating the crucial role of expert data in model training for imitation learning under the AcT assumption.

In the implementation of SeMAIL based on the AcT assumption, we utilize a unimodal Gaussian distribution as the policy output. The environments and expert's data in our experiments all feature an optimal unimodal action distribution. This implicit environmental assumption which fits SeMAIL's policy output, may contribute to SeMAIL’s near-expert performance shown in \cref{tb: homo-results}. In complex environments with multimodal optimal behavior distributions, direct using current SeMAIL's policy may result in performance degradation. A potential solution is to modify the policy’s action output to a mixture of Gaussian distributions and introduce an additional loss, which aligns the policy’s action output with the expert’s action distribution without explicitly altering the AcT assumptions or other components of SeMAIL. 

In particular cases, a previous distractor could turn into a task-relevant factor in a certain condition and influence the agent’s further decision-making. In this case, we can add a short period of the particular distractor states in the policy function and train the agent. We will consider these extensions in our future work.

\begin{table}
\caption{Performance on five DMC tasks. We present the mean and std of 10 trajectories' return for SeMAIL and SeMAIL (No Exp-M).}
\label{tb: act-results}
\vskip 0.15in
\begin{center}
\begin{small}
\begin{sc}
\begin{tabular}{lccccc}
\toprule
Method & Walker Walk & Walker Run & Cheetah Run & Finger Spin & Hopper Hop \\
\midrule
SeMAIL    & 900.1 $\pm$ 57.6 & 463.4 $\pm$ 42.7 & 217.9 $\pm$ 125.8 & 161.1 $\pm$ 68.7 & 90.5 $\pm$ 75.6 \\
SeMAIL (No Exp-M) & 62.8 $\pm$ 11.5 & 63.7 $\pm$ 17.4 & 15.1 $\pm$ 13.3 & 3.0 $\pm$ 4.5 & 1.2 $\pm$ 2.5  \\
\bottomrule
\end{tabular}
\end{sc}
\end{small}
\end{center}
\vskip -0.1in
\end{table}

\subsection{Modifications of Non-IL Methods in Noisy Observations}

In \cref{sec:rl with noise}, we have clarified that the Non-IL methods in noisy observations heavily rely on using environmental rewards to exclude distractions. Our concern is how to learn a good policy under noisy observations (e.g., natural background videos) in imitation learning, which is free from reward-design issues. To our knowledge, our work is the first to consider this setting in IL. When designing the experiments, we attempted to modify related methods to adapt the setting of our work by utilizing the estimated pseudo rewards given by the discriminator to separate task-relevant and -irrelevant information. However, they failed to solve the tasks (the “with pseudo reward” row in \cref{tb: pseudoreward}). Using ill-estimated rewards can result in biased task-related representations and further mistaken estimation of rewards, which leads to a vicious cycle.

\begin{table}
\caption{Performance on three DMC tasks. We present the mean and std of 10 trajectories' return for random policy and Non-IL method with pseudo reward.}
\label{tb: pseudoreward}
\vskip 0.15in
\begin{center}
\begin{small}
\begin{sc}
\begin{tabular}{lccc}
\toprule
Method & Walker Run & Finger Spin & Hopper Hop \\
\midrule
Random Policy    & 29.0 $\pm$ 11.8 & 2.1 $\pm$ 2.8 & 0.0 $\pm$ 0.0 \\
with Pseudo Reward  & 50.4 $\pm$ 17.2 & 3.3 $\pm$ 5.0 & 0.7 $\pm$ 2.2  \\
\bottomrule
\end{tabular}
\end{sc}
\end{small}
\end{center}
\vskip -0.1in
\end{table}

\end{document}